\def\BibTeX{{\rm B\kern-.05em{\sc i\kern-.025em b}\kern-.08em
    T\kern-.1667em\lower.7ex\hbox{E}\kern-.125emX}}
\title{Joint Optimization of Energy Consumption and Completion Time in Federated Learning\thanks{This research is part of the programme DesCartes and is supported by the National Research Foundation, Prime Minister’s Office, Singapore under its Campus for Research Excellence and Technological Enterprise (CREATE) programme. 
This research is also supported by the National Research Foundation, Singapore under its Strategic Capability Research Centres Funding Initiative, Nanyang Technological University (NTU)
Startup Grant, Nanyang Technological University (NTU)-Wallenberg AI, Autonomous Systems and Software Program (WASP) Joint Project, and Singapore Ministry of Education Academic Research Fund under Grant Tier 2 MOE2019-T2-1-176. Any opinions, findings and conclusions or recommendations expressed in this material are those of the author(s) and do not reflect the views of National Research Foundation, Singapore. Corresponding Author: Jun Zhao.}} 
\date{December 2020}
\newtheorem{theorem}{Theorem}
\newtheorem{lemma}{Lemma}
\renewenvironment{thebibliography}[1]{
  \begin{oldthebibliography}{#1}
    \setlength{\itemsep}{0em}
    \setlength{\parskip}{0em}
}
{
  \end{oldthebibliography}
}
\begin{document}

\author[1]{Xinyu Zhou}
\author[2]{Jun Zhao}
\author[2]{Huimei Han}
\author[3]{Claude Guet}

\affil[1]{\small  School of Computer Science \& Engineering and ERI@N, Nanyang Technological University, Singapore}
\affil[2]{\normalsize School of Computer Science \& Engineering, Nanyang Technological University, Singapore} 
\affil[3]{\normalsize School of Physical \& Mathematical Sciences, Nanyang Technological University, Singapore} 
\affil[ ]{\normalsize xinyu003@e.ntu.edu.sg, junzhao@ntu.edu.sg, huimei.han@ntu.edu.sg, cguet@ntu.edu.sg} 



 \pagestyle{empty} \thispagestyle{empty}

 \maketitle 
 \thispagestyle{fancy}
\pagestyle{fancy}
\renewcommand{\footrulewidth}{0pt}
\chead{2022 IEEE 42nd International Conference on Distributed Computing Systems (ICDCS)}
\renewcommand{\headrulewidth}{0pt} 

\renewcommand{\footrulewidth}{0pt}


\begin{abstract}
Federated Learning (FL) is an intriguing distributed machine learning approach due to its privacy-preserving characteristics. 
To balance the trade-off between energy and execution latency, and thus accommodate different demands and application scenarios, we formulate an optimization problem to minimize a weighted sum of total energy consumption and completion time through two weight parameters.
 The optimization variables include bandwidth, transmission power and CPU frequency of each device in the FL system, where all devices are linked to a base station and train a global model collaboratively.
Through decomposing the non-convex optimization problem into two subproblems, we devise a resource allocation algorithm to determine the bandwidth allocation, transmission power, and CPU frequency for each participating device. We further present the convergence analysis and computational complexity of the proposed algorithm.
Numerical results show that our proposed algorithm not only has better performance at different weight parameters (i.e., different demands) but also outperforms the state of the art.
\end{abstract}

\begin{IEEEkeywords}
Federated learning, FDMA, resource allocation.
\end{IEEEkeywords}

 \setcounter{page}{1005}

\section{Introduction}


 \setlength{\belowdisplayskip}{4pt plus 1pt minus 1.0pt} \setlength{\belowdisplayshortskip}{4pt plus 1pt minus 1.0pt}
\setlength{\abovedisplayskip}{4pt plus 1pt minus 1.0pt} \setlength{\abovedisplayshortskip}{0.0pt plus 2.0pt}





To create a model with high accuracy, machine learning (ML) needs a large size of high-quality training data. Due to limitations of user privacy, data security and competition among industries, usually the data is limited. Besides, the concern around data privacy is rising.  Related laws and regulations are appearing, such as the General Data Protection Regulation  (GDPR) of the European Union  (EU), so it becomes more challenging to collect sensitive data together for learning a model. Therefore, federated learning (FL) could be a solution to collaboratively train an ML model while preserving data on its local device distributedly~\cite{li2021survey}. FL is a special distributed learning framework, which was proposed by Google in \cite{mcmahan2017communication}. The algorithm in \cite{mcmahan2017communication} was called FedAvg. In FedAvg, each device trains a model locally for a certain number of local iterations and uploads their parameters to a base station periodically to build a shared global model. 

\textbf{Challenges and motivation}. On account of the huge number of parameters in prevalent ML models, FL has high requirements of device memory, computational capability and communication bandwidth.
The limited communication and computation resources in the real world bring several challenges to the deployment of FL: (1) Limited bandwidth results in long \textit{latency} between clients and the server and consequently affects the convergence time of the global model. (2) A large amount of \textit{energy} is needed because a satisfactory model requires plenty of local computation and communication rounds. (3) Due to the large size of current ML models, low-battery devices with limited computation resources have difficulties in joining an FL framework \cite{luo2020hfel}. To improve the efficiency of FL,
it is crucial to save both communication and computation resources, which can be achieved by reducing communication overhead~\cite{luping2019cmfl,han2020adaptive,chen2021communication} or finding optimal resource (e.g., transmission power, bandwidth, CPU frequency, etc.) allocation strategies~\cite{yang2021energy, luo2020hfel, dinh2021federated}. This paper focuses on the latter solution.
An optimal resource allocation strategy will allow us to reduce the execution latency and energy consumption, and accelerate the convergence of the global model. 
Therefore, it is essential to consider how to adjust bandwidth, transmission power and CPU clock frequency for each participating device to jointly optimize energy consumption and completion time (i.e., delay). Moreover, different scenarios have different energy and latency demands. For example, connected autonomous vehicles in a smart transportation system require low-latency control decisions, while personal computers may tolerate higher latency to utilize the idle time to train their model. Thus, it raises the following question: how to allocate the transmit power, bandwidth and CPU frequency to each user device to find the optimal trade-off between energy consumption and the total completion time of the FL system while accommodating to different scenarios?


Motivated by the above question, we consider a basic FL system via frequency-division multiple access (FDMA) and formulate a joint optimization of energy and delay problem over a wireless network. 
The objective function includes a weighted sum of the total energy consumption and the total completion time. Given a certain weight for a specific application scenario, we can find a trade-off between energy and execution latency.
The participating devices in FL are under power and bandwidth constraints while pursuing as little energy and time consumption as possible. 

\textbf{Contributions}. In summary, the main contributions of this paper are listed as follows:
\begin{itemize}
    \item 
    {\color{black} We introduce the joint optimization of energy and delay framework into the design of a federated learning system via FDMA.} The objective is to optimize a weighted sum of energy consumption and total completion time so that it can satisfy diverse requirements.
    \item The optimization problem includes a sum-of-ratio form of communication energy consumption, which is \mbox{non-convex} and cannot be solved through traditional optimization techniques. To tackle this problem, we first propose to decompose the original problem into two subproblems: the first subproblem is to minimize computation energy and completion time, and the second is to minimize the communication energy. The first one can be solved through its Lagrangian dual problem. The second subproblem is a sum-of-ratio minimization problem, and we transform it into a convex subtractive-form problem through a Newton-like method similar to \cite{jong2012efficient} used for solving a general fractional programming problem. Convergence analysis and time complexity are provided. 
    \item Numerical results show: 1) at different weight parameters (i.e., different requirements), our resource allocation algorithm is able to find a corresponding solution to jointly minimize energy consumption and completion time. 2) Our joint optimization framework's performance is better than the communication-only optimization and computation-only optimization framework. 3) Our resource allocation scheme outperforms the state of the art, especially when there is a tight delay constraint.
\end{itemize}

The rest of this paper is organized as follows. Section \ref{sec:literature_review} presents related works. In Section \ref{sec:sys_model}, we introduce the FL model and the necessary notations. Problem formulation is provided in Section \ref{sec:prob_formu}. We detail the proposed algorithm in Section \ref{sec:solution_joint_prob}. Then Section~\ref{sec:conv} provides the convergence and the computational complexity of the algorithm. 
 Numerical results are shown in Section \ref{sec:experiments}. Finally, Section VIII concludes the paper.
 

\section{Related Work} \label{sec:literature_review}

In this section, we survey related research. Section~\ref{sec:literature_review1} as the first subsection is devoted to federated learning (FL) over wireless networks which our paper is also about. Then in Section~\ref{sec:literature_review2}, we elaborate on other FL studies. Finally, since our paper tackles resource allocation and appears in ICDCS, Section~\ref{sec:literature_review3} presents related works on resource allocation in other distributed systems.

\subsection{Federated Learning over Wireless Networks} \label{sec:literature_review1}

For simplicity, below we use WFL, short for wireless federated learning, to describe FL over wireless networks. In many papers~\cite{li2021talk,dinh2021federated,wang2019adaptive,zeng2021energy,luo2020hfel}, the base station (i.e., the aggregator) is also seen as an edge server, so the literature may also use the term  federated edge learning (FEEL). Since our paper minimizes the weighted sum of two metrics: energy consumption and delay, our coverage about WFL next is categorized into energy minimization, delay minimization, optimization of other metrics, weighted optimization, and \mbox{non-optimization}  studies, respectively. Note that papers~\cite{dinh2021federated,luo2021cost,luo2020hfel} minimizing the weighted sum of energy and delay are elaborated in the part about weighted optimization. During the discussion, we also compare our work with many studies including \cite{yang2021energy,yang2020delay,li2021talk,zeng2021energy,chen2020convergence,dinh2021federated,luo2021cost,luo2020hfel}.  



\textbf{Minimizing the energy consumption in WFL.} Three representative papers~\cite{yang2021energy,zeng2021energy,li2021talk} minimize the  energy consumption only in WFL are explained as follows.

Yang~\textit{et~al.}~\cite{yang2021energy} minimize the sum of all devices' energy consumption, which includes the local computation energy and the wireless transmission energy. In \cite{yang2021energy}, devices access the wireless transmission media via FDMA, as in our paper. The optimization variables in \cite{yang2021energy} include the local training accuracy and each device's transmission time per global round,  bandwidth allocation, CPU frequency and transmit power. The constraints cover an upper bound on the total delay and a lower bound on each device's transmission time per global round such that the local model can be uploaded.

The \textbf{major difference between \cite{yang2021energy}  and our work} is that the optimization objective is just the energy consumption in~\cite{yang2021energy}, but is the weighted sum of the energy and delay in our paper. The consideration of both energy and delay in the objective function makes our optimization much more difficult to solve than~\cite{yang2021energy}, as explained below. {\color{black}Since we put the total completion time into the objective function, a part of our optimization problem (i.e., Subproblem 2 in (\ref{equa:min6}) later) becomes an NP-complete problem---the sum-of-ratios problem \cite{jong2012efficient}}. While addressing a weaker optimization objective than our work,~\cite{yang2021energy} has two more optimization variables than ours: the local training accuracy and each device's transmission time per global round. Yet, since both two variables can be decoupled from others, their optimal values can be easily obtained as in Theorem 2 of~\cite{yang2021energy}. {\color{black} The rest variables of \cite{yang2021energy} are the same as ours. However, if just minimizing the total energy consumption, it will be easy to decouple the optimization problem into two subproblems as \cite{yang2021energy} does. Since all subproblems in \cite{yang2021energy} are convex, \cite{yang2021energy} can easily obtain the optimal solution.

}




Different from~\cite{yang2021energy} and our paper which both consider only CPU computing resources at devices, Li~\textit{et~al.}~\cite{li2021talk} and
Zeng~\textit{et~al.}~\cite{zeng2021energy} incorporate devices' GPU computation for WFL. In particular,~\cite{li2021talk} studies GPU only, while~\cite{zeng2021energy} uses CPU and GPU simultaneously. The energy consumption model for GPU is different from that of CPU used in our paper, so we omit the detailed discussions of~\cite{li2021talk,zeng2021energy}.

\textbf{Minimizing the delay in WFL.} Below we discuss three work~\cite{yang2020delay,chen2020convergence,luo2020hfel} on delay minimization only in WFL.

Minimizing the delay by 
Yang~\textit{et~al.}~\cite{yang2020delay} is actually used to provide an initial feasible solution for the problem of minimizing the energy consumption subject to the delay constraint in~\cite{yang2021energy}. In other words, the work done in~\cite{yang2020delay} is a subroutine of that in~\cite{yang2021energy}. We have discussed above how our optimization is more  challenging to solve than that of~\cite{yang2021energy}, so our work is further more difficult than~\cite{yang2020delay}. In~\cite{yang2020delay}, the FDMA bandwidth allocation is not analyzed directly, but a bisection method is used to sandwich the minimum delay.


Distinct from~\cite{yang2020delay} considering FDMA just like its extended version~\cite{yang2021energy}, the research~\cite{chen2020convergence} by Chen~\textit{et~al.} employs orthogonal frequency-division multiple access (OFDMA) for delay minimization in WFL. In OFDMA, the radio is divided into 2-dimensional resource blocks (RBs) over time and frequency. The optimization variables in~\cite{chen2020convergence} include device selection matrices and RB allocation matrices, which all have binary elements. Hence,~\cite{chen2020convergence}  handles binary integer programming, while the variables in~\cite{yang2020delay} and our paper are all continuous.







\textbf{Optimizing other metrics in WFL.} Other metrics in WFL have also been optimized, such as 
the number of global aggregation rounds~\cite{zeng2020federated},
the total number of bits sent from the devices to the aggregator~\cite{wang2021edge}, the admission data rates of all devices~\cite{wang2021cflmec},
the
communication error~\cite{9488818},
the total communication cost~\cite{deng2021share} with each communication link being assigned a cost number,
the training loss~\cite{wang2019adaptive,chen2020joint,shi2020joint},
the learning efficiency~\cite{he2020importance,ren2020accelerating}, and
a metric capturing the long-term performance of WFL~\cite{xu2020client}.


\textbf{Weighted optimization in WFL.}
Below we review three work~\cite{dinh2021federated,luo2021cost,luo2020hfel} on the weighted minimization of energy consumption and delay in WFL.

\textbf{The major difference between \cite{dinh2021federated} and our paper} is that the channel access method in \cite{dinh2021federated} for devices is time-division multiple access (TDMA), while our paper adopts FDMA. In addition, although the optimization variables in \cite{dinh2021federated} also include parameters capturing local training and global training progresses respectively, the two parameters are optimized just by a numerical method: an exhaustive grid search. Other optimization variables in \cite{dinh2021federated} include each device's CPU frequency and transmission time per global round (we do not discuss additional variables listed in \cite{dinh2021federated} since they can be decided from variables already explained). 


In~\cite{luo2021cost}, the optimization variables include the number of local iterations per global round,  the number of global rounds, and the number of devices selected (via uniform samling) per global round. Hence,~\cite{luo2021cost} tackles integer programming, while the variables in~\cite{dinh2021federated} and our paper are all continuous.


In contrast to~\cite{dinh2021federated,luo2021cost} which have only a two-level hierarchy: the base station (often also an edge server) and the devices,~\cite{luo2020hfel} adopts a three-level hierarchy (introduced also by~\cite{9148862}): an cloud server, multiple edge servers, and the devices. \textbf{A detailed comparison between~\cite{luo2020hfel} and our paper is as follows.} Firstly,
the  optimization variables in both work include each device's bandwidth allocation via FDMA and CPU frequency, while~\cite{luo2020hfel} also considers edge association of devices, and our paper also optimizes each device's transmission power. Secondly, in our work, a challenge in solving the optimization arises from the coupling between the FDMA bandwidth allocation and the weighted minimization of energy and delay. 
Yet,~\cite{luo2020hfel} avoids such challenge by simplifying the Shannon formula for devices' data rates. In particular, in Equation (5) of~\cite{luo2020hfel}, the noise power as the denominator inside the logarithm of the Shannon formula is forcefully assumed as a constant that does not scale with the allocated bandwidth, while it should the noise power spectral density times the bandwidth. Such extreme simplification in~\cite{luo2020hfel} makes the  bandwidth allocation easy to solve. In this paper, we use the exact expression of the Shannon formula and hence run into a much more challenging optimization.

The work~\cite{feng2021design} also considers weighted optimization in WFL, but is not about
energy consumption or delay.



\textbf{Studies other than optimizing metrics in WFL.} Beyond optimization studies, WFL has also been investigated in terms of 
scheduling policies~\cite{yang2019scheduling,ren2020scheduling,wang2020towards}, 
 and mobility-based orchestration~\cite{deveaux2020orchestration}.



\subsection{Other Federated Learning Studies}  \label{sec:literature_review2}

In addition to federated learning over wireless networks elucidated in the previous subsection, we present other federated learning work in this subsection.

\textbf{Communication-efficient FL.} Researchers have enabled FL to     be more communication-efficient, by transmitting clustered model updates~\cite{9448151}, avoiding irrelevant updates~\cite{luping2019cmfl}, adaptive parameter freezing~\cite{chen2021communication}, and sketching~\cite{rothchild2020fetchsgd}.

\textbf{Security/privacy of FL.} Security of FL has been addressed by defending against Byzantine attacks~\cite{guo2021siren}, and enforcing  device liability~\cite{malandrino2021toward}. Privacy of FL is examined in~\cite{9751555,wei2020federated}.

\textbf{Incentives in FL.} Designing incentive mechanisms for FL has been tackled in~\cite{9355731,zhou2021truthful}.

\textbf{Machine learning approaches to improve FL.} FL systems have also benefited from other machine learning techniques including reinforcement learning~\cite{9155494}, meta learning~\cite{yue2022efficient,9355664}, one-shot learning~\cite{DBLP:journals/corr/abs-2009-07999}, and online learning~\cite{han2020adaptive}.

\textbf{Applications of FL.} FL has been applied to various tasks including
 crowdsensing~\cite{wang2020learning,wang2021federated,liu2020boosting}, RFID clone detection~\cite{piva2021tags}, and the classification of unmanned aerial vehicles (UAVs)~\cite{9432739}.

\subsection{Resource Allocation in Other Distributed Systems}  \label{sec:literature_review3}

Resource allocation has also been studied in other distributed systems, such as blockchain~\cite{huang2021resource,DBLP:journals/corr/abs-2106-12332},  
cloud computing~\cite{276950,9036983}, edge computing~\cite{9491614}, distributed  learning~\cite{10.1145/3363554,9491589}, and wireless networks~\cite{9547355}.




\begin{table}  
\caption{Notations}  
\label{tab:notation}
\begin{tabularx}{8.5cm}{@{}l@{\hspace{.9\tabcolsep}}lX}  
\hline                      
Description & Symbol  \\  
\hline  
Total energy consumption  & $E$ \\   
Maximum uplink bandwidth & $B$ \\
Allocated bandwidth of device $n$ & $B_n$ \\
The number of samples on device $n$ & $D_n$ \\  
Total number of samples on all devices & $D$ \\  
Transmission data size of device $n$ & $d_n$ \\  
Maximum completion time in one global communication round  & $\mathcal{T}$ \\  
Maximum completion time  & $T$ \\  
Data transmission time of device $n$ & $T^{up}_n$ \\
Local computation time of device $n$ & $T^{cmp}_n$ \\
Data transmission rate of device $n$ & $r_n$ \\
Minimum transmission rate of device $n$ & $r_n^{min}$ \\
CPU frequency of device $n$ & $f_n$         \\
Minimum and maximum CPU frequency of device $n$ & $f_n^{min}, f_n^{max}$ \\
The number of global aggregation rounds & $R_g$ \\ 
The number of local iteration per global round & $R_l$ \\ 
The  effective switched capacitance & $\kappa$  \\ 
The number of CPU cycles per sample on device $n$ & $c_n$     \\ 
Local computation energy consumption of device $n$ & $E_n^{cmp}$  \\ 
Transmission energy consumption of device $n$  & $E_n^{trans}$ \\ 
The number of total devices & $N$ \\
Weight parameters & $w_1$, $w_2$ \\ 
Noise power spectral density & $N_0$ \\
Transmission power of device $n$ & $p_n$ \\
Minimum and maximum transmission power of device $n$ & $p_n^{min}, p_n^{max}$ \\
Channel gain & $\bm{g}$ \\
Lagrangian multipliers & $\bm{\lambda},\bm{\tau},\mu$ \\
Auxiliary variables & $\bm{\nu},\bm{\beta}$\\
\hline
\end{tabularx}  
\end{table}

\section{System Model} \label{sec:sys_model}
The system model is FL over wireless communications. We consider a star network with a base station, as shown in Fig. \ref{fig:FL_framework}.
We suppose that the base station serves $N$ devices. The set of $N$ devices is denoted by $\mathcal{N}$. 

{\color{black} \textbf{Federated Learning.}}
Each device contains a dataset $\mathcal{D}_n$ with $D_n$ samples. 
The total dataset is $\mathcal{D}$ = $\bigcup_{n=1} ^{N}\mathcal{D}_n$. 
We have the local dataset $\mathcal{D}_n$ = $\bigcup_{i=1}^{D_n}\{\boldsymbol{x_{ni}},y_{ni}\}$. 
$\boldsymbol{x_{ni}} \in \mathbb{R}^p$ is an input vector containing the features of each data sample, where $\mathbb{R}$ denotes the set of real numbers.
$y_{ni}$ is the label of each data sample. 
Notations in this paper are listed in Table~\ref{tab:notation}. Bold symbols stand for vectors.

\begin{figure}[h]
    \centering
    \includegraphics[scale=0.5]{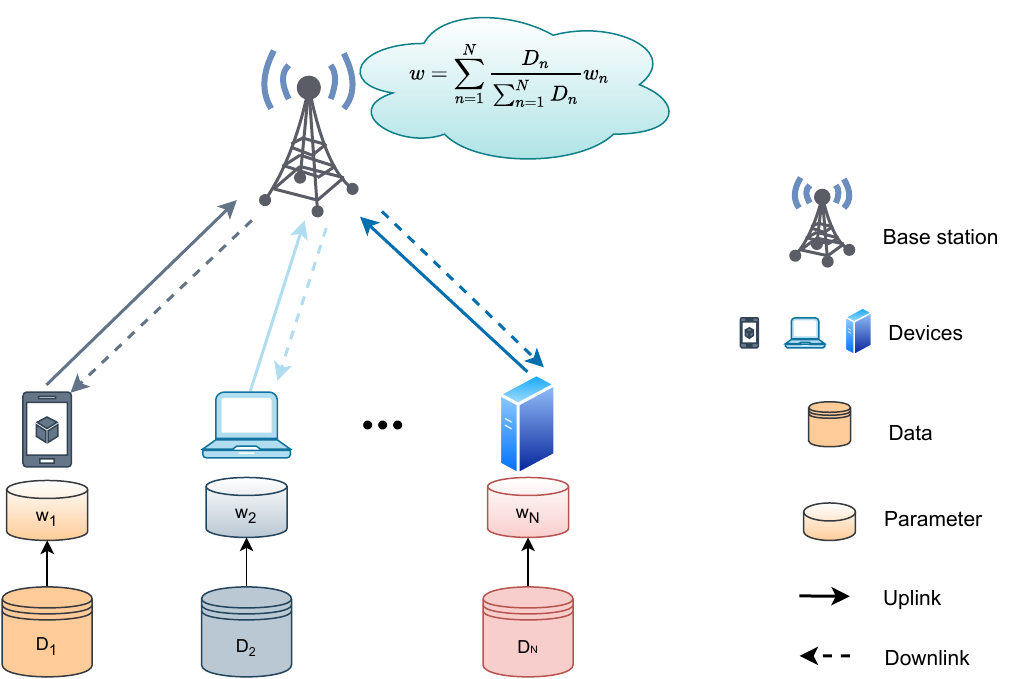}
    \caption{Federated learning.}
    \label{fig:FL_framework}
\end{figure}

Then, the overall FL model is to solve the problem $\min _{\boldsymbol{w}\in\mathbb{R}^p} F(\boldsymbol{w}) = \sum_{n=1}^{N} \frac{D_n}{D} l_n(\boldsymbol{w})$,
 where $D = \sum_{n=1}^{N} D_n$. 

Each local device trains its own model and the aim is to minimize the corresponding loss function defined at device $n$ as $l_n(\bm{w}) = \sum_{i=1}^{D_n} \frac{f_{ni}(\bm{w})}{D_n},$
 where $\bm{w}$ is {\color{black} the parameters related to the whole global model. 
The function $f_{ni}(\bm{w})$ measures the error between the model prediction and the ground truth label. Note that in each local iteration, each device $n$ uses all of its $D_n$ data samples, not just a subset.}

As shown in Fig.~\ref{fig:FL_framework}, each device trains its own model locally for a certain number of iterations and then uploads its model parameter to the base station. The base station uses all the uploaded parameters to calculate the {\color{black} weighted}
 average model parameter and broadcasts it to each device, {\color{black} where the weight for each device $n$'s parameter is $\frac{D_n}{D}$; i.e., the fraction of device $n$'s number of data points over the total number of data points}. The above uploading and broadcasting process is called global communication.

{\color{black} \textbf{FDMA.}} In this paper, different devices communicate with the base station via FDMA. FL over FDMA has also been studied in~\cite{yang2020delay,li2021talk,yang2021energy,luo2020hfel} (see Section \ref{sec:literature_review} for detailed comparisons between these work and ours). FDMA is simple to implement even for mobile devices with limited computation capability~\cite{li2021talk}. We do not consider OFDMA since our optimization problem with FDMA is already difficult to tackle. Although~\cite{chen2020convergence} adopts OFDMA for FL, but it addresses
delay minimization only and handles binary integer programming, while the variables in our paper are all continuous. In addition, TDMA over FL is investigated in~\cite{dinh2021federated,he2020importance}.

We will formulate a joint optimization of energy and completion time problem in FL via FDMA. The objective function is a weighted sum of total energy and time. Total energy consumption and total completion time are discussed in Sections~\ref{subsec:total_e_def} and~\ref{subsec:total_t_def}, respectively.

\subsection{Total Energy Consumption} \label{subsec:total_e_def}
The \textbf{total energy consumption}, $E$, comprises wireless network \textbf{transmission energy consumption} and \textbf{local computation energy consumption}. Following works \cite{dinh2021federated, luo2020hfel, yang2021energy}, we do not consider the energy consumption of the base station (or the central server), either.

\subsubsection{Transmission energy consumption}
Generally, transmission time refers to the uplink time and the downlink time. However, since the output power of the base station is significantly larger than the maximum uplink transmission power of a user device, the downlink time is much smaller than the uplink time. Thus, we neglect the downlink time.

From the Shannon formula, we express the data transmission rate, $r_n$, as
\begin{align} \label{def:data_rate}
    r_n = B_n\log_2 (1+ \frac{g_np_n}{N_0B_n}),
\end{align}
where $B_n$ is the bandwidth allocated to user $n$, $N_0$ is the power spectral density of Gaussian noise, $g_n$ is the channel gain between user device $n$ and the base station and $p_n$ is the transmission power. Note that there is no interference in FDMA since the channels assigned to users are separated by different frequencies.

{Strictly speaking, the expression (\ref{def:data_rate}) describes just the maximum theoretical bit rate achievable in a wireless channel. The actual data rate depends on the spectral efficiency of the communication technology. Yet, we use (\ref{def:data_rate}) as the data rate for simplicity as in much prior work~\cite{yang2020delay,li2021talk,yang2021energy,dinh2021federated,he2020importance} on FL. Also, if ``$=$'' in  (\ref{def:data_rate}) is replaced by ``$\leq$'', ``$=$'' will still be taken in our optimization problem to save the transmission time and hence the transmission energy.}

The data transmission will happen in each global aggregation. If the transmission data size of user device $n$ is $d_n$, the transmission time of user device $n$ is
\begin{align} \label{t_up_n}
    T^{up}_n = d_n/r_n.
\end{align}
Then, the \textbf{transmission energy consumption} of user device $n$ is 
\begin{align} \label{e_trans_n}
    E^{trans}_n = p_nT^{up}_n.
\end{align}

\subsubsection{Local computation energy consumption}
Following~\cite{dinh2021federated}, and assuming that the CPU frequency $f_n$ of device $n$ does not change as time goes once it is optimized, we express the energy, $E_n^{cmp\prime}$, consumed in one \textbf{local} iteration of user device $n$ as: 
\begin{align}\label{equa:energy consump}
    E_n^{cmp\prime} = \kappa c_n D_n f_n^2,
\end{align}
where $c_n$ is the number of CPU cycles per sample on device $n$, and $\kappa$ is the effective switched capacitance.

The \textbf{local computation energy consumption in one global iteration} is
\begin{align} \label{e_cmp_n}
    E_n^{cmp} = \kappa R_l c_nD_n f_n^2,
\end{align}
where $R_l$ is the number of local iterations
and the total energy consumption is 
\begin{align} \label{equa:total_energ}
    E = R_g\sum_{n=1}^{N} (E_n^{trans} + E_n^{cmp}),
\end{align}
where $R_g$ is the number of global rounds.

\subsection{Total Completion Time} \label{subsec:total_t_def}
The total completion time (i.e., the total delay), $T$, of the FL system model over wireless network comprises local computation time and data transmission time. The local computation time of device $n$ in one \textbf{global} iteration, $T^{cmp}_n$, is 
\begin{align} \label{t_cmp_n}
    T_n^{cmp} = R_l\frac{c_nD_n}{f_n}.
\end{align}
Then with the transmission time given by $T_n^{up} = \frac{d_n}{r_n}$, the total completion time $T$ becomes $R_g\max_{n \in \mathcal{N}} \{T_n^{cmp} + T_n^{up}\}$.

As we have mentioned, since the base station's output power is larger than the maximum uplink transmission power of a user device, the downlink time is much smaller than the uplink time. Hence, we ignore the downlink time.

\section{Optimization Problem Formulation} \label{sec:prob_formu}
To minimize the total energy consumption as well as the total completion time of the whole FL training process, we use a weighted sum method to formulate the optimization problem. As $E_n^{cmp}$ in (\ref{e_cmp_n}), $E_n^{trans}$ in (\ref{e_trans_n}), $T_n^{up}$ in (\ref{t_up_n}) and $T_n^{cmp}$ in (\ref{t_cmp_n}) are highly related to $p_n$, $B_n$ and $f_n$, we choose $p_n$, $B_n$ and $f_n$ as three optimization variables. Some works may consider more optimization variables such as the global communication rounds $R_g$, the local computations rounds $R_l$ or the model accuracy. However, $R_g$, $R_l$ and the model accuracy are determined by the exact ML model run on each device. To make our optimization problem apply to flexible scenarios and broad use cases, we do not consider more optimization variables. 

The optimization problem is as follows:
\begin{subequations} \label{equa:min}
\begin{align}
\min_{p_n,f_n, B_n} &~ w_1E+w_2R_g\max_{n \in \mathcal{N}} \{T_n^{cmp} + T_n^{up}\}, \tag{\ref{equa:min}} \\
    \text{subject to:} &  \notag \\
    & p_{n}^{min} \le p_n \le p_n^{max},~\forall n\in \mathcal{N}, \label{pminpmax} \\
    & f_{n}^{min} \le f_n \le f_n^{max},~\forall n\in \mathcal{N}, \label{fminfmax} \\
    & \sum _{n=1}^N B_n \leq B \label{constra:bandwidth},
\end{align}
\end{subequations}
with $w_1$, $w_2$ $\in [0, 1]$ denoting two weight parameters of total energy consumption and completion time and $w_1 + w_2 = 1$. If devices are in a low-battery condition, we can set $w_1 = 1$ and $w_2 = 0$ to allocate resources in the FL system. In contrast, if the FL system is time-sensitive, the weight parameters can be set as $w_1 = 0$ and $w_2 = 1$. $E$ is defined in (\ref{equa:total_energ}).
Constraint (\ref{pminpmax}) and constraint (\ref{fminfmax}) set the ranges of the transmission power and computation frequency of each device.
 Constraint (\ref{constra:bandwidth}) refers to the uplink bandwidth constraint. 
 
 Note that the max function in (\ref{equa:min}) makes the optimization hard to solve. To circumvent this difficulty, we replace the max function by an auxiliary variable {\color{black}$\mathcal{T}$} with an additional constraint; then,  the problem writes as:
\begin{subequations} \label{equa:min2}
\begin{align}
\min_{p_n,f_n, B_n, \mathcal{T}} &~ w_1R_g \sum_{n=1}^{N} (p_nT_n^{up} \!+\! \kappa R_lc_nD_nf_n^2) \!+\! w_2 R_g\mathcal{T}, \tag{\ref{equa:min2}} \\
    \text{subject to:} &  \notag \\
& \text{(\ref{pminpmax}),~(\ref{fminfmax}),~(\ref{constra:bandwidth}),}  \notag\\ 
    & T_n^{cmp} + T_n^{up} \leq \mathcal{T}, ~\forall n\in \mathcal{N}.\label{constra:time}  
\end{align}
\end{subequations}
Constraint (\ref{constra:time}) is the total time constraint.

\textbf{Difficulty of solving problem (\ref{equa:min2})}. Since the formulation of $T_n^{up}$ and constraint (\ref{constra:time}) are not convex, problem (\ref{equa:min2}) is non-convex and thus can be difficult to solve. Consequently, we must explore other ways to transform this problem to make it solvable. Additionally, the complex problem formulation increases the difficulty of transforming this problem.

\subsection{Problem Decomposition}
To continuously transform problem (\ref{equa:min2}) into a more simple form, we must decompose the problem. Because $p_n$ and $B_n$ are in the same term, we thereby separate problem (\ref{equa:min2}) into two subproblems. One has $p_n$ and $B_n$ as optimization variables and the other has $f_n$ and {\color{black}$\mathcal{T}$} as optimization variables.

Two minimization problems are as follows:
\paragraph{Subproblem 1}
\begin{subequations} \label{equa:min5} 
\begin{align}
\min_{f_n, \mathcal{T}} &~ w_1 R_g\sum_{n=1}^{N} \kappa R_lc_nD_nf_n^2+ w_2R_g \mathcal{T} \tag{\ref{equa:min5}} \\
    \text{subject to:} &  ~\text{(\ref{fminfmax}),}~\text{(\ref{constra:time}).}  \notag
\end{align}
\end{subequations}

\paragraph{Subproblem 2}
\begin{subequations} \label{equa:min6}
\begin{align}
\min_{p_n, B_n} &~ w_1R_g \sum_{n=1}^{N} p_n\frac{d_n}{B_n \log_2(1+\frac{p_ng_n}{N_0B_n})} \tag{\ref{equa:min6}} \\
    \text{subject to:} &~ \text{(\ref{pminpmax}),~(\ref{constra:bandwidth}),}~\text{(\ref{constra:time}).} \notag
\end{align}
\end{subequations}

Our strategy is to randomly choose feasible $(\bm{p}, \bm{B})$ at first, and solve Subproblem 1 to get the optimal $\bm{f}$. Then, we solve Subproblem 2 to get the optimal $(\bm{p}, \bm{B})$. The whole process consists of iteratively solving Subproblems 1 and 2 until convergence.

\section{Solution of The joint optimization problem} \label{sec:solution_joint_prob}
In this section, we will explain how to solve Subproblems  1 and 2. 

\subsection{Solution to Subproblem 1} 
It is easy to verify that the function of variables $f_n$ and $\mathcal{T}$ in Eq. (\ref{equa:min5}) is convex. {\color{black} To solve a convex optimization problem, a typical approach is to use the Karush--Kuhn--Tucker (KKT) conditions. For reader's convenience, we briefly describe KKT below while more details can be found in Section 5.5.3 of a book by Boyd~et~al.~\cite{boyd2004convex}. For a convex optimization problem, as long as its objective function and constraints are differentiable, and it satisfies Slater's condition\footnote{In brief, Slater's condition states that the optimal solution must be interior of the feasible region.}, KKT conditions are sufficient and necessary conditions to get the optimal solution.}

Then, we treat the optimization problem by the KKT approach and introduce the Lagrange function as follows:
\begin{align}
    L_1(f_n, \mathcal{T}, \bm{\lambda}) = w_1R_g \sum_{n=1}^N \kappa R_l c_n D_n f_n^2 + w_2R_g  \mathcal{T} \notag\\ + \sum_{n=1}^N \lambda_n [ (\frac{R_lc_nD_n}{f_n}+T_n^{up})-\mathcal{T}],
\end{align}
where $\bm{\lambda}$ is the multiplier associated with the inequality constraint (\ref{constra:time}). After applying KKT conditions, we get 
\begin{align}
    & \frac{\partial L_1}{\partial f_n} \!=\! 2w_1 R_g R_l\kappa c_nD_nf_n \!-\! \lambda_n\frac{R_lc_nD_n}{f_n^2} \!=\! 0 ,~ \forall n \!\in\! \mathcal{N}, \label{KKT:f_n} \\
    & \frac{\partial L_1}{\partial \mathcal{T}} = w_2R_g-\sum_n^N\lambda_n = 0, \label{KKT:T} \\
      & \lambda_n [ (\frac{R_lc_nD_n}{f_n}+T_n^{up})-\mathcal{T}] = 0,
\end{align}
from which we derive 
\begin{align}
    f_n^* = \sqrt[3]{\frac{\lambda_n}{2w_1R_g\kappa}}, \label{relation_f_lamda} \text{ and }
    w_2 = \frac{\sum_n^N \lambda_n}{R_g}.
\end{align}
Then, the dual problem is 
\begin{subequations} \label{equa:subproblem1_dual}
\begin{align} 
\max_{\lambda_n}~&\sum_{n=1}^N (2^{-\frac{2}{3}} + 2^{\frac{1}{3}}) hc_nD_n\lambda_n^{\frac{2}{3}} + T_n^{up}\lambda_n \tag{\ref{equa:subproblem1_dual}} \\
\text{subject to}, \notag \\
&\sum_{n=1}^N \lambda_n =  w_2R_g, \\
& \lambda_n \geq 0,
\end{align}
\end{subequations}
where $h = R_l(w_1\kappa R_g)^{\frac{1}{3}}$. Obviously, this dual problem is a simple convex optimization problem. In this paper, we use CVX \cite{grant2014cvx} 
to solve it and get the optimal $\bm{\lambda^*} = [\lambda_1^*,..., \lambda_n^*]$. Then, we are able to calculate $\bm{f}^*$ through equation (\ref{relation_f_lamda}) and get $\bm{f}$ in the below,
{\color{black} 
\begin{align} \label{relation_f_lamda2}
\bm{f} = \min(\bm{f}^{min}, \max(\bm{f}^*,\bm{f}^{min})).
\end{align}
} 

\subsection{Subproblem 2}
Subproblem 2 is a typical minimization sum-of-ratios problem and thus very hard to solve. Firstly, we define
\begin{align}
G_n(p_n, B_n) := B_n\log_2(1+\frac{p_ng_n}{N_0B_n})
\end{align}
to {\color{black} simplify}
 the notations.
Additionally, (\ref{constra:time}) can be written as $r_n \ge r_n^{min}$, where $r_n^{min} = \frac{d_n}{\mathcal{T}-\frac{R_lc_nD_n}{f_n}}$.
In fact, $G_n(p_n, B_n)$ is the same as the data rate $r_n$ defined in (\ref{def:data_rate}).

Then, we transform Subproblem 2 into the epigraph form through adding an auxiliary variable $\beta_n$ and let $\frac{p_nd_n}{G_n(p_n, B_n)} \le \beta_n$. Then, we get an equivalent problem called \textbf{\textit{SP2\_v1}}.
\begin{subequations} \label{equa:subproblem2_transform}
\begin{align}
    \textbf{\textit{SP2\_v1:}}&~\min_{p_n, B_n, \beta_n}  w_1R_g \sum_{n=1}^{N} \beta_n \tag{\ref{equa:subproblem2_transform}}\\
    \text{subject to}, \notag \\
    &\text{(\ref{pminpmax})},~\text{(\ref{constra:bandwidth}),} \notag \\
    & G_n(p_n, B_n) \ge r_n^{min}, ~\forall n \in \mathcal{N}, \label{constra:subproblem2_data_rate_min}\\ 
    & p_nd_n -\beta_nG_n(p_n, B_n) \le 0,~\forall n \in \mathcal{N}, \label{constra:epi_data_rate} 
\end{align}
\end{subequations}
where $\beta_n$ is an auxiliary variable. Because $\frac{p_nd_n}{G_n(p_n, B_n)} \le \beta_n$ and $G_n(p_n, B_n) > 0$, we can get constraint (\ref{constra:epi_data_rate}).

However, due to constraint (\ref{constra:epi_data_rate}), problem \textbf{\textit{SP2\_v1}} is \mbox{non-convex}, which is still intractable. To continue transforming this problem, we need to give the following lemma.
\begin{lemma} \label{lemma:requisite_fra_prog}
$G_n(p_n, B_n)$ is a concave function.
\end{lemma}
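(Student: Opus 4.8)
The plan is to recognize $G_n$ as a \emph{perspective function} of a simple concave function, since the perspective operation is known to preserve concavity. First I would rewrite the expression by pulling $B_n$ out of the logarithm's argument, namely $G_n(p_n, B_n) = B_n \log_2\!\big(1 + \frac{g_n}{N_0}\cdot\frac{p_n}{B_n}\big)$. Defining the single-variable map $\phi(u) = \log_2(1 + \frac{g_n}{N_0}u)$, I would note that $\phi$ is concave on its domain because $\log_2$ is concave and $u \mapsto 1 + \frac{g_n}{N_0}u$ is affine, and concavity is preserved under composition with an affine map. Then $G_n(p_n, B_n) = B_n\,\phi(p_n/B_n)$ is exactly the perspective of $\phi$ with scaling variable $B_n$.

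Next I would invoke the standard result that the perspective of a concave function is jointly concave on the set $\{(x,t): t>0,\ x/t \in \operatorname{dom}\phi\}$ (this is in Section 3.2.6 of Boyd and Vandenberghe~\cite{boyd2004convex}, which the paper already cites). Since the allocated bandwidth satisfies $B_n > 0$ and the transmission power $p_n > 0$, the domain condition holds and the theorem applies directly, yielding that $G_n$ is jointly concave in $(p_n, B_n)$. This finishes the argument.

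As a self-contained alternative, I would compute the $2\times 2$ Hessian of $G_n$ directly. After differentiating, the Hessian factors as the positive scalar $\frac{(g_n/N_0)^2}{(\ln 2)\,(B_n + (g_n/N_0)p_n)^2}$ multiplied by the matrix $\begin{pmatrix} -B_n & p_n \\ p_n & -p_n^2/B_n \end{pmatrix}$. Its diagonal entries are negative, and its determinant equals $B_n\cdot(p_n^2/B_n) - p_n^2 = 0$, so the Hessian is negative semidefinite, which again gives concavity.

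The hard part in either route is bookkeeping rather than conceptual depth. In the perspective approach, the only care needed is to confirm the domain condition $B_n > 0$ so that the perspective theorem applies cleanly. In the Hessian approach, the subtle point is that the determinant vanishes (the Hessian has rank one), so one cannot appeal to strict definiteness and must argue negative \emph{semi}definiteness from the negative diagonal together with the zero determinant. I would favour the perspective argument, since it is shorter and avoids the second-derivative computation entirely.
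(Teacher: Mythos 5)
Your proof is correct, and your preferred route is genuinely different from the paper's. The paper proves Lemma~\ref{lemma:requisite_fra_prog} (Appendix~A) by brute force: it writes out the $2\times 2$ Hessian of $G_n$ and observes that the quadratic form $\bm{x}^T \nabla^2 G_n\, \bm{x}$ collapses to a negative multiple of the perfect square $(x_1 g_n B_n - x_2 g_n p_n)^2$, hence the Hessian is negative semidefinite. Your perspective-function argument reaches the same conclusion with no differentiation at all: $G_n(p_n,B_n)=B_n\phi(p_n/B_n)$ with $\phi(u)=\log_2\bigl(1+\tfrac{g_n}{N_0}u\bigr)$ concave, and the perspective operation preserves concavity on $\{B_n>0\}$ (Section~3.2.6 of \cite{boyd2004convex}, which the paper already cites for the KKT conditions), with the domain condition immediate since bandwidth is positive. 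This is shorter, less error-prone, and explains \emph{why} the result holds---Shannon-type rate functions are perspectives of concave single-variable functions---whereas the paper's computation buys only self-containedness plus explicit Hessian entries (which the paper never reuses elsewhere). Your fallback Hessian route is essentially the paper's proof with correct entries (your scalar-times-matrix factorization matches the paper's Hessian exactly); the only difference is how negative semidefiniteness is concluded: you use the trace/determinant test (both diagonal entries negative and determinant zero, so the eigenvalues are $0$ and the negative trace), while the paper exhibits the quadratic form as a negative perfect square. The two observations are equivalent, since your factored matrix is precisely the rank-one matrix $-\tfrac{1}{B_n}\,\bm{v}\bm{v}^T$ with $\bm{v}=[B_n,\,-p_n]^T$, and your handling of the degenerate (rank-one, non-strict) case is exactly right.
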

\begin{proof}
See Appendix A.
\end{proof}
The original objective function in Subproblem 2 is the sum of fractional functions. It is obvious that $p_nd_n$ is convex, and Lemma \ref{lemma:requisite_fra_prog} reveals $G_n(p_n, B_n)$ is concave. With such trait, problem \textbf{\textit{SP2\_v1}} can be transformed into a subtractive-form problem via the following theorem.

\begin{theorem} \label{theor:equal_prob_subp2}
If $(\bm{p}^*, \bm{B}^*, \bm{\beta}^*)$ is the solution of problem \textbf{SP2\_v1}, there exists $\bm{\nu}^*$ satisfying that $(\bm{p}^*, \bm{\beta}^*)$ is a solution of the following problem with  $\bm{\nu} = \bm{\nu}^*$ and $\bm{\beta} = \bm{\beta}^*$.
\begin{subequations} \label{equa:subproblem2_lagrangian_min}
\begin{align}
\textbf{SP2\_v2:}&~\min_{p_n, B_n} \sum_{n=1}^N \nu_n(p_nd_n-\beta_nG_n(p_n, B_n)) \tag{\ref{equa:subproblem2_lagrangian_min}} \\
 \text{subject to:}&~ \text{(\ref{pminpmax})},~\text{(\ref{constra:bandwidth}),}~\text{(\ref{constra:subproblem2_data_rate_min}).} \notag
\end{align}
\end{subequations}
Moreover, under the condition of $\bm{\nu} = \bm{\nu}^*$ and $\bm{\beta} = \bm{\beta}^*$, ($\bm{p}^*, \bm{B}^*$) satisfies the following equations.
\begin{align}
    \nu_n^* &= \frac{w_1R_g}{G_n(p_n^*, B_n^*)},~n=1,\cdots,N,  \label{equa:nu_update} \\
   \text{and } \beta_n^* &= \frac{p_n^*d_n}{G_n(p_n^*, B_n^*)}, ~n=1,\cdots, N. \label{equa:beta_update}
\end{align}

\end{theorem}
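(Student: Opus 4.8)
The plan is to derive the result from the Karush--Kuhn--Tucker (KKT) conditions of problem \textbf{\textit{SP2\_v1}}, treating $\nu_n$ as the multiplier attached to the epigraph constraint (\ref{constra:epi_data_rate}). Since \textbf{\textit{SP2\_v1}} is non-convex, I can only invoke the KKT conditions as \emph{necessary} optimality conditions, which requires a constraint qualification (e.g.\ the linear independence or Mangasarian--Fromovitz condition) to hold at the optimum $(\bm{p}^*, \bm{B}^*, \bm{\beta}^*)$; I would first argue this holds given the box constraint (\ref{pminpmax}), the linear budget (\ref{constra:bandwidth}), and the concavity of $G_n$ from Lemma~\ref{lemma:requisite_fra_prog}. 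Writing the Lagrangian with multipliers $\nu_n$ for (\ref{constra:epi_data_rate}), $\tau_n$ for (\ref{constra:subproblem2_data_rate_min}), and $\mu$ for (\ref{constra:bandwidth}), the derivation splits into extracting the values of $\bm{\nu}^*$ and $\bm{\beta}^*$, and then matching the remaining stationarity conditions to those of \textbf{\textit{SP2\_v2}}.

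The first key step is stationarity in $\beta_n$. Because $\beta_n$ appears only in the objective term $w_1 R_g \beta_n$ and in constraint (\ref{constra:epi_data_rate}) through $-\beta_n G_n(p_n, B_n)$, the condition $\partial L / \partial \beta_n = 0$ reads $w_1 R_g - \nu_n G_n(p_n^*, B_n^*) = 0$, which immediately yields (\ref{equa:nu_update}). Since $w_1 R_g > 0$ and $G_n(p_n^*, B_n^*) \ge r_n^{min} > 0$ by feasibility of (\ref{constra:subproblem2_data_rate_min}), this forces $\nu_n^* > 0$ for every $n$. Complementary slackness for (\ref{constra:epi_data_rate}) then gives $\nu_n^*\,(p_n^* d_n - \beta_n^* G_n(p_n^*, B_n^*)) = 0$, and $\nu_n^* > 0$ forces the constraint to be active, so $p_n^* d_n = \beta_n^* G_n(p_n^*, B_n^*)$, which is exactly (\ref{equa:beta_update}).

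It remains to show that $(\bm{p}^*, \bm{B}^*)$ minimizes \textbf{\textit{SP2\_v2}} for these frozen $\bm{\nu}^*, \bm{\beta}^*$. I would observe that the stationarity conditions of \textbf{\textit{SP2\_v1}} in the variables $p_n$ and $B_n$ --- which involve only the terms $\nu_n(p_n d_n - \beta_n G_n)$, $\tau_n(r_n^{min} - G_n)$, $\mu B_n$, and the box-constraint multipliers --- are formally identical to the stationarity conditions of \textbf{\textit{SP2\_v2}} once $\nu_n$ and $\beta_n$ are fixed at $\nu_n^*$ and $\beta_n^*$. Hence $(\bm{p}^*, \bm{B}^*)$, together with the inherited multipliers $\bm{\tau}$ and $\mu$, satisfies the KKT system of \textbf{\textit{SP2\_v2}}. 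The crucial point is that \textbf{\textit{SP2\_v2}} is \emph{convex}: with $\nu_n^* > 0$ and $\beta_n^* > 0$, each term $\nu_n^*(p_n d_n - \beta_n^* G_n)$ is convex because $p_n d_n$ is linear and $-G_n$ is convex by Lemma~\ref{lemma:requisite_fra_prog}, while (\ref{pminpmax}) and (\ref{constra:bandwidth}) are linear and (\ref{constra:subproblem2_data_rate_min}) is convex for the same reason. For a convex program satisfying Slater's condition the KKT conditions are sufficient, so $(\bm{p}^*, \bm{B}^*)$ is a global minimizer of \textbf{\textit{SP2\_v2}}, completing the argument.

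The main obstacle I anticipate is the bookkeeping needed to justify that the two KKT systems coincide: I must verify that the multipliers $\bm{\tau}$ and $\mu$ carried over from \textbf{\textit{SP2\_v1}} remain feasible (nonnegative) and satisfy complementary slackness for \textbf{\textit{SP2\_v2}}, and that no multiplier associated with (\ref{constra:epi_data_rate}) leaks into the $p_n$- or $B_n$-stationarity in a way not reproduced by freezing $\beta_n = \beta_n^*$. A secondary technical point is confirming a constraint qualification for the non-convex problem \textbf{\textit{SP2\_v1}} so that its KKT conditions are genuinely necessary; this is where I would spend the most care, since the whole chain of reasoning rests on being able to write down those first-order conditions at the optimum.
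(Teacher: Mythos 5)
Your proof is correct, but it is worth being clear about its relation to the paper: the paper does not actually spell out an argument at all --- it disposes of Theorem~\ref{theor:equal_prob_subp2} in one line by citing Lemma~2.1 of \cite{jong2012efficient}, and what you have written is essentially a self-contained reconstruction of that cited lemma's proof (which is the same KKT computation: stationarity in $\beta_n$ yields (\ref{equa:nu_update}), positivity of $\nu_n^*$ plus complementary slackness on (\ref{constra:epi_data_rate}) yields (\ref{equa:beta_update}), and the frozen-parameter problem \textbf{\textit{SP2\_v2}} is convex by Lemma~\ref{lemma:requisite_fra_prog}, so the inherited KKT point is globally optimal). The citation buys brevity; your derivation buys self-sufficiency and, importantly, exposes the hypothesis the citation hides, namely that KKT conditions are \emph{necessary} at the optimum of the non-convex problem \textbf{\textit{SP2\_v1}}. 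You flag the constraint qualification as the open point but leave it unargued; it can be closed concretely as follows. Assume the $(\bm{p},\bm{B})$-feasible region admits a Slater point $(\hat{\bm{p}},\hat{\bm{B}})$ (strict box bounds, $\sum_n \hat{B}_n < B$, $G_n(\hat{p}_n,\hat{B}_n) > r_n^{min}$). By convexity, the direction $(\hat{\bm{p}}-\bm{p}^*,\hat{\bm{B}}-\bm{B}^*)$ has strictly negative inner product with the gradient of every active constraint among (\ref{pminpmax}), (\ref{constra:bandwidth}), (\ref{constra:subproblem2_data_rate_min}); now augment this direction by $M\bm{1}$ in the $\bm{\beta}$-coordinates, which leaves those constraints untouched (they do not involve $\bm{\beta}$) and contributes $-M\,G_n(p_n^*,B_n^*) \le -M\,r_n^{min} < 0$ to the inner product with the gradient of each epigraph constraint (\ref{constra:epi_data_rate}); for $M$ large enough this single direction certifies MFCQ at $(\bm{p}^*,\bm{B}^*,\bm{\beta}^*)$, so the KKT system you rely on genuinely holds. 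Two minor corrections: in the last step you need only convexity of \textbf{\textit{SP2\_v2}} together with the existence of a KKT point --- Slater's condition is required for necessity, not sufficiency, so invoking it there is superfluous; and $\nu_n^*>0$ additionally uses $w_1>0$, which is implicit whenever Subproblem~2 is non-trivial.
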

\begin{proof}
The proof follows  from Lemma 2.1 in \cite{jong2012efficient}.
\end{proof}

\subsection{Solution to Subproblem 2}
\textbf{Theorem \ref{theor:equal_prob_subp2}} indicates that problem \textbf{\textit{SP2\_v2}} has the same optimal solution $(\bm{p^*, \bm{B}^*})$ with problem \textbf{\textit{SP2\_v1}}. Consequently, we can first provide $(\bm{\nu}, \bm{\beta})$ to obtain a solution $(p_n, B_n)$ through solving problem \textbf{\textit{SP2\_v2}}. Then, the second step is to calculate $(\bm{\nu}, \bm{\beta})$ according to (\ref{equa:nu_update}) and (\ref{equa:beta_update}). The first step can be seen as an inner loop to solve problem \textbf{\textit{SP2\_v2}} and the second step is the outer loop to obtain the optimal $( \bm{\nu},\bm{\beta})$. The iterative optimization algorithm is given in Algorithm \ref{algo:MN}, which is a Newton-like method. The original algorithm is provided in \cite{jong2012efficient}. In Algorithm \ref{algo:MN}, we define 
\begin{align}
    \phi_1(\bm{\beta})\!\hspace{-1pt}& =\hspace{-1pt}\! [-p_1d_1 \!\hspace{-1pt}+\hspace{-1pt}\! \beta_1G_1(p_1 \!,\! B_1), \! \cdots \!,\! -p_Nd_N \!\hspace{-1pt}+\hspace{-1pt}\!\beta_NG_N(p_N \!, \!B_N)]^T \!,\\
    \phi_2(\bm{\nu}) \!\hspace{-1pt}& =\hspace{-1pt} \! [-w_1R_g \!\hspace{-1pt} +\hspace{-1pt} \! \nu_1G_1(p_1 \!, \! B_1) \!, \!\cdots \!,\! -w_1R_g \!\hspace{-1pt} +\hspace{-1pt}\! \nu_NG_N(p_N \!, \!B_N)]^T \!,\!
\end{align}
which are two vectors in $\mathbb{R}^{N}$, and
\begin{align}
    & \phi(\bm{\beta}, \bm{\nu}) = [\phi_1^T(\bm{\beta}), \phi_2^T(\bm{\nu})]^T,
\end{align}
which is a vector in $\mathbb{R}^{2N}$. Additionally, the Jacobian matrices of $\phi_1(\bm{\beta})$ and $\phi_2(\bm{\nu})$ are
\begin{align}
   \phi_1^{\prime}(\bm{\beta}) = diag(G_n(p_n, B_n)|_{n = 1,\cdots, N}), \\
      \phi_2^{\prime}(\bm{\nu}) = diag(G_n(p_n, B_n)|_{n = 1,\cdots, N}) ,
\end{align}
where $diag()$ represents a diagonal matrix. \cite{jong2012efficient} proves that when $\phi(\bm{\beta}, \bm{\nu})=\bm{0}$, the optimal solution $(\bm{\nu }, \bm{\beta})$ is achieved. Simultaneously, it satisfies the system of equations (\ref{equa:nu_update}) and (\ref{equa:beta_update}).
Thus, a Newton-like method can be used to update $(\bm{\nu }, \bm{\beta})$, which is done in (\ref{update_nu_beta}).
\begin{algorithm}
\caption{Optimization of $(\bm{p}, \bm{B})$ and $(\bm{\nu}, \bm{\beta})$}
\label{algo:MN}
Initialize $i = 0$, $\xi \in (0, 1)$, $\epsilon \in (0, 1)$. Given feasible ($\bm{p}^{(0)}$, $\bm{B}^{(0)}$). \\
\Repeat{$\phi(\bm{\beta}^{(i)},\bm{\nu}^{(i})) = \bm{0}$ or the number of iterations reaches the maximum iterations $i_0$}{
Calculate 
$$\nu_n^{(i)} = \frac{w_1R_g}{G_n(p_n^{(i)}, B_n^{(i)})},~\beta_n^{(i)} = \frac{p_nd_n}{G_n(p_n^{(i)}, B_n^{(i)})}$$
for $n=1,\cdots, N$.

Obtain $(\bm{p}^{(i+1)}$, $\bm{B}^{(i+1)})$ through solving \textbf{\textit{SP2\_v2}} according to \textbf{Theorem \ref{theor:express_B_p}} by using CVX given $(\bm{\nu}^{(i)}, \bm{\beta}^{(i)})$.

Let $j$ be the smallest integer which satisfies
\begin{align} \label{newton_method}
    | \phi(\bm{\beta}^{(i)}+\xi^j\bm{\sigma_1}^{(i)}, \bm{\nu}^{(i)}+\xi^j\bm{\sigma_2}^{(i)}) | \notag \\ \le (1-\epsilon\xi^j) | \phi(\bm{\beta}^{(i)}, \bm{\nu}^{(i)}) | ,
\end{align}
where 
\begin{align}
    \bm{\sigma_1}^{(i)} = -[\phi_1^{\prime}(\bm{\beta}^{(i)})]^{-1}\phi_1(\bm{\beta}^{(i)}),\notag\\ \bm{\sigma_2}^{(i)} = -[\phi_2^{\prime}(\bm{\nu}^{(i)})]^{-1}\phi_2(\bm{\nu}^{(i)}).
\end{align}

Update 
\begin{align} \label{update_nu_beta}
    (\bm{\beta}^{(i+1)},\bm{\nu}^{(i+1)}) =( \bm{\beta}^{(i)} + \xi^j\bm{\sigma_1}^{(i)},\bm{\nu}^{(i)}+\xi^{j}\bm{\sigma_2}^{(i)}).
\end{align}

Let $i \leftarrow i+1$.
}
\end{algorithm}

Until now, the rest problem we need to focus on is problem \textbf{\textit{SP2\_v2}}. Remember that $\bm{\nu}$ and $\bm{\beta}$ are fixed in this problem, which can be seen as constants. Given Lemma \ref{lemma:requisite_fra_prog}, $p_nd_n$ is convex, and three constraints \text{(\ref{pminpmax})}~\text{(\ref{constra:bandwidth})}~\text{(\ref{constra:subproblem2_data_rate_min})} are convex, the objective function (\ref{equa:subproblem2_lagrangian_min}) in problem \textbf{\textit{SP2\_v2}} is convex. Thus, KKT conditions are the sufficient and necessary optimality conditions for finding the optimal solution.


Before applying KKT conditions, we write down the partial Lagrangian function of problem \textbf{\textit{SP2\_v2}}: \\
$L_2(p_n, B_n, \tau_n, \mu)=$
\begin{align} \label{equal:sp2_v2_lagran}
    &\sum_{n=1}^N \nu_n(p_nd_n-\beta_nB_n\log_2(1+\frac{p_ng_n}{N_0B_n}))\notag \\& - \sum_{n=1}^N \tau_n(B_n\log_2(1 +\frac{p_ng_n}{N_0B_n})-r_n^{min}) + \mu(\sum_{n=1}^NB_n - B),
\end{align}
where $\tau_n|_{n=1,\ldots,N}$ and $\mu$ are non-negative Lagrangian multipliers.


After applying KKT conditions to problem \textbf{\textit{SP2\_v2}}, we get
\begin{align}
    \frac{\partial L_2}{\partial p_n} & = \nu_n(d_n-\frac{\beta_ng_n}{N_0(1+\vartheta_n)\ln2})-\frac{\tau_ng_n}{N_0(1+\vartheta_n)\ln2} =0, \label{lag:partial_p}\\
    \frac{\partial L_2}{\partial B_n} &= -(\nu_n\beta_n+\tau_n)\log_2(1+\vartheta_n)+\frac{(\nu_n\beta_n+\tau_n)p_ng_n}{(1+\vartheta_n)\ln2 N_0B_n}\notag \\& +\mu = 0,  \label{lag:partial_B}\\
    -\tau_n & (B_n\log_2(1+\vartheta_n)-r_n^{min}) = 0, \label{lag:data_rate}\\
    \mu (\sum_{n=1}^N & B_n-B) = 0, \label{lag:mu}
\end{align}
where $\vartheta_n = \frac{p_ng_n}{N_0B_n}$ for $n=1,\cdots,N$.

With (\ref{lag:partial_p})--(\ref{lag:mu}), \textbf{Theorem \ref{theor:express_B_p}} is given to find the optimal bandwidth and transmission power for each device. 
\begin{theorem} \label{theor:express_B_p}
The optimal bandwidth $\bm{B}$ and transmission power $\bm{p}$ are expressed as 
\begin{align}
   & B_n^* = 
   \begin{cases} 
   & \frac{r_n^{min}}{\log_2(1+\Lambda_n)},~\text{if}~\tau_n \neq 0,~ n = 1,\cdots, N, \\
   & \text{Solution~to~problem (\ref{SP2_v3})} ,~\text{if}~\tau_n = 0,
   \end{cases} \label{express_B}\\
   & p_n^* = \min(p_n^{max},\max(\Gamma(B_n),~p_n^{min})) \label{express_p}\notag\\&\quad\quad\text{when}~\mu \neq 0,~ n=1,\cdots, N,
\end{align}
where 
\begin{align}
 \Lambda_n &= \frac{(\nu_n\beta_n+\tau_n)g_n}{N_0d_n\nu_n \ln2},\\
 \Gamma(B_n) &= (\frac{(\nu_n\beta_n+\tau_n)g_n}{N_0d_n\nu_n\ln2}-1)\frac{N_0B_n}{g_n}.
\end{align}

\end{theorem}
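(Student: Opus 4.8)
The plan is to lean on the convexity of \textbf{\textit{SP2\_v2}} (already established) together with Slater's condition, so that the stated KKT system (\ref{lag:partial_p})--(\ref{lag:mu}) is simultaneously necessary and sufficient for optimality; the two formulas (\ref{express_B}) and (\ref{express_p}) will then be read off from these conditions rather than proved from scratch. First I would process the stationarity condition in $p_n$, equation (\ref{lag:partial_p}). Isolating the common factor $\frac{g_n}{N_0(1+\vartheta_n)\ln 2}$ and dividing through by $\nu_n$ collapses (\ref{lag:partial_p}) into a single scalar equation for $1+\vartheta_n$, namely $1+\vartheta_n = \frac{(\nu_n\beta_n+\tau_n)g_n}{N_0 d_n \nu_n \ln 2} = \Lambda_n$. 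Since $\vartheta_n = \frac{p_n g_n}{N_0 B_n}$ by definition, this is exactly the relation $p_n = (\Lambda_n-1)\frac{N_0 B_n}{g_n} = \Gamma(B_n)$, which supplies the interior stationary value of the transmit power as a function of the allocated bandwidth.

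Next I would fold in the box constraint (\ref{pminpmax}) on $p_n$. Because the partial Lagrangian is convex in $p_n$ with a single interior stationary point $\Gamma(B_n)$, the constrained minimiser over $[p_n^{min}, p_n^{max}]$ is the projection of $\Gamma(B_n)$ onto that interval, which is precisely the clipped expression $p_n^* = \min(p_n^{max},\max(\Gamma(B_n),p_n^{min}))$ of (\ref{express_p}); the standing hypothesis $\mu\neq 0$ records, through the complementary-slackness condition (\ref{lag:mu}), that the shared bandwidth budget (\ref{constra:bandwidth}) is active at the optimum.

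For the bandwidth I would split on the multiplier $\tau_n$ via complementary slackness (\ref{lag:data_rate}). When $\tau_n\neq 0$, the minimum-rate constraint (\ref{constra:subproblem2_data_rate_min}) must hold with equality, so that $B_n\log_2(1+\vartheta_n)=r_n^{min}$; substituting the value of $1+\vartheta_n$ obtained in the first step and solving for $B_n$ recovers the closed form in the first branch of (\ref{express_B}). When $\tau_n=0$, the rate constraint is slack and no longer pins down $B_n$: the bandwidth is then governed solely by the remaining stationarity condition (\ref{lag:partial_B}) and the active budget, which I would package as the reduced convex program \textbf{\textit{SP2\_v3}}, yielding the second branch.

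The main obstacle I anticipate is the $\tau_n=0$ branch and the inter-device coupling it creates. Once the per-device rate constraints are inactive, the bandwidths are tied together only through the single budget $\sum_{n} B_n \le B$ and the shared multiplier $\mu$, so (\ref{lag:partial_B}) can no longer be solved device-by-device in closed form and must be delegated to the auxiliary problem \textbf{\textit{SP2\_v3}}; establishing that this reduced problem is the correct characterisation is the delicate part. A secondary point requiring care is justifying that the clipping in (\ref{express_p}) is genuinely optimal under the box constraint, which relies on the convexity and monotonicity of the objective in $p_n$ rather than merely on the interior first-order condition.
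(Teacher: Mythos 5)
Your skeleton matches the paper's own proof: apply the KKT conditions to \textbf{\textit{SP2\_v2}}, read the power--bandwidth relation $p_n=\Gamma(B_n)$ off the stationarity condition (\ref{lag:partial_p}), and use complementary slackness (\ref{lag:data_rate}) to pin down the closed-form bandwidth when $\tau_n\neq 0$; your explicit projection argument for the clipping in (\ref{express_p}) is sound (the paper leaves that step implicit). However, there is a genuine gap exactly where you flag an ``anticipated obstacle'': you never establish the second branch of (\ref{express_B}). The theorem asserts that for $\tau_n=0$ the optimal $B_n$ solves the \emph{specific} reduced program (\ref{SP2_v3}); saying the residual bandwidths are ``governed by'' some auxiliary problem is not a derivation of that program. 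The paper obtains it constructively: substitute $p_n=\Gamma(B_n)$ with $\tau_n=0$ into the objective of \textbf{\textit{SP2\_v2}}, which eliminates $p_n$ and leaves an objective linear in $B_n$ with coefficient $\frac{\nu_n\beta_n}{\ln 2}-\frac{N_0 d_n\nu_n}{g_n}-\nu_n\beta_n\log_2\bigl(\frac{\beta_n g_n}{N_0 d_n\ln 2}\bigr)$, translate the power box constraint (\ref{pminpmax}) into a constraint on $B_n$ through the same relation, and shrink the bandwidth budget to $B-B_{\tau_n\neq 0}$ after removing the devices already fixed by the first branch. A minor related point: substituting $1+\vartheta_n=\Lambda_n$ into $B_n\log_2(1+\vartheta_n)=r_n^{min}$ gives $B_n=r_n^{min}/\log_2(\Lambda_n)$, which matches the paper's Appendix B but not the ``$1+\Lambda_n$'' printed in the theorem statement, so your claim that this step ``recovers the first branch'' silently glosses over that discrepancy.

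A second omission makes your case split non-actionable: you have no mechanism for determining the multipliers $(\bm{\tau},\mu)$, hence no way to decide which devices satisfy $\tau_n\neq 0$ and which fall to (\ref{SP2_v3}) (a problem that itself takes $\mu$ and $\tau_n$ as data). The bulk of the paper's proof is devoted to precisely this: it substitutes the $p_n$--$B_n$ relation into the Lagrangian to form the dual problem (\ref{equa:subp2_dual}), solves the dual feasibility constraint for $\tau_n$ in terms of $\mu$ via the Lambert $W$ function, $\tau_n=\frac{(\mu-j_n)\ln 2}{W\bigl(\frac{\mu-j_n}{e\, j_n}\bigr)}-\nu_n\beta_n$ with $j_n=\frac{\nu_n d_n N_0}{g_n}$, shows $g''(\mu)\le 0$ so that the concave dual is maximized by bisection on $g'(\mu)=0$, and then sets $\tau_n$ to the positive part of that expression. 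Without this step (or an equivalent construction of the multipliers), the theorem's formulas remain conditioned on quantities your proof never exhibits, so the proposal as written proves only the first branch and the power clipping, not the full statement.
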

\begin{proof}
See Appendix B.
\end{proof}
Then, we have solved problem \textbf{\textit{SP2\_v2}}.

\subsection{Resource Allocation Algorithm} \label{sec6}
Until now, we have already solved Subproblem 1 and Subproblem 2. Here we give the complete resource allocation algorithm in Algorithm \ref{algo:resource_allocation_algorithm}.

\begin{algorithm} 
\caption{Resource Allocation Algorithm}
\label{algo:resource_allocation_algorithm}
Initialize $ sol^{(0)} = (\bm{p}^{(0)}, \bm{B}^{(0)}, \bm{f}^{(0)})$ of problem (\ref{equa:min2}), iteration number $k=1$.\\
\Repeat{$|sol^{(k)}-sol^{(k-1)}| \leq \epsilon_0$ or the number of iterations achieves $K$}{
Solve Subproblem 1 through solving problem (\ref{equa:subproblem1_dual}) with CVX given $(\bm{p}^{(k-1)}, \bm{B}^{(k-1)})$. Obtain $\bm{f}^{(k)}$ according to equation (\ref{relation_f_lamda2}).\\
Solve Subproblem 2 through Algorithm \ref{algo:MN} and obtain $(\bm{p}^{(k)}, \bm{B}^{(k)})$ with CVX.\\
$ sol^{(k)} = (\bm{p}^{(k)}, \bm{B}^{(k)}, \bm{f}^{(k)})$.\\
Set $k \leftarrow k+1$.
}
\end{algorithm}
Algorithm \ref{algo:resource_allocation_algorithm} first initializes a feasible solution according to the range of power $\bm{p}$ and bandwidth $\bm{B}$ and the sum of the bandwidth of each device cannot exceed $B$. 
Then, solving problem (\ref{equa:subproblem1_dual}) with given $(\bm{p}, \bm{B})$ can make sure we get an optimal $\bm{f}$ at this step. 
Subproblem 2 does not include $\bm{f}$ and it is an optimization problem which focuses on $\bm{p}$ and $\bm{B}$.

\section{Convergence and Time Complexity} \label{sec:conv}
In this section, we analyze the convergence of our resource allocation algorithm (Algorithm \ref{algo:resource_allocation_algorithm}) and present the time complexity of Algorithm \ref{algo:resource_allocation_algorithm}.

\subsection{Convergence Analysis}
In Algorithm \ref{algo:resource_allocation_algorithm}, given $(\bm{p}^{(k-1)}, \bm{B}^{(k-1)})$ at iteration $k$, we can obtain an optimal solution $\bm{f}^{(k)}$ of problem (\ref{equa:subproblem1_dual}).

Then, Algorithm \ref{algo:MN} is called in Algorithm \ref{algo:resource_allocation_algorithm}, so next we discuss the convergence of Algorithm \ref{algo:MN}. To simplify the notation, we define $\bm{\alpha} := (\bm{\beta}, \bm{\nu})$.

According to Theorem 3.2 in \cite{jong2012efficient}, if the following three conditions hold: (i) $\phi(\bm{\alpha})$ is differentiable and satisfies the Lipschitz condition in a solution set $\Omega$, (ii) assume a number $L > 0$ such that for $\bm{\alpha_1}, \bm{\alpha_2}$ $\in \Omega$,
\begin{align} \label{M_condition1}
    || \phi^\prime(\bm{\alpha_1}) - \phi^\prime(\bm{\alpha_2})|| \le L ||\bm{\alpha_1}-\bm{\alpha_2}||,
\end{align}
and (iii) there exits a number $M > 0$ so that for each $\bm{\alpha} \in \Omega$,
\begin{align} \label{M_condition}
|| [\phi^\prime(\bm{\alpha})]^{-1} || \le M,
\end{align}
then no matter at which starting point $\bm{\alpha}^0 \in \Omega$, (\ref{newton_method}) and (\ref{update_nu_beta}) in Algorithm \ref{algo:MN} will converge to a unique solution of $\phi(\bm{\alpha}) = 0$ in a linear rate. In addition, the convergence rate in the solution's neighborhood is quadratic.

$\phi(\bm{\alpha})$ in Algorithm \ref{algo:MN} is actually linear and it is easy to prove it satisfies the Lipshitz condition and the conditions (\ref{M_condition1}) (\ref{M_condition}). Therefore, Algorithm \ref{algo:MN} converges with a linear rate at any starting point $\bm{\alpha}^0 \in \Omega$ and a quadratic convergence rate of the solution's neighborhood.

Therefore, iteratively solving Subproblem 1 to get $\bm{f}$ and Subproblem 2 to get ($\bm{p}, \bm{B}$) will converge eventually.

\subsection{Time Complexity}
For Algorithm \ref{algo:MN}, we can use floating point operations (flops) to count the time complexity. One real addition/multiplication/division is counted as one flop. We mainly focus on analyzing Steps 3--6 because they take a major part of computational complexity. Step 3 needs $2N$ flops. Step 4 utilizes CVX to solve problem \textit{\textbf{SP2\_v2}}. Besides, CVX invokes an interior-point algorithm to solve Semidefinite Programming (SDP) problems. Thus, given the solution accuracy $\epsilon_1 > 0$, the worst time complexity is $\mathcal{O}(N^{4.5}\log\frac{1}{\epsilon_1})$ \cite{luo2010semidefinite}. Additionally, 
Step 5 will take $\mathcal{O}((j+1)N)$ flops. In fact, $j$ is a small number usually. If $j=0$ (i.e., $\xi^j=1$), Step 6 is just the Newton method. Otherwise, it is a Newton-like method. Therefore, the convergence rate is superlinear or quadratic \cite{jong2012efficient}. Obviously, Step 6 takes $4N$ flops. Since the number of iterations is not larger than $i_0$, the time complexity of Algorithm \ref{algo:MN} will not exceed $\mathcal{O}(i_0(N^{4.5}\log\frac{1}{\epsilon_1}+(j+1)N))$.

As for Algorithm \ref{algo:resource_allocation_algorithm}, it solves two subproblems successively through using the convex package CVX in each iteration. Steps 3 and 4 both use CVX to solve two subproblems. As we have mentioned above, the time complexity caused by CVX is $\mathcal{O}(N^{4.5}\log\frac{1}{\epsilon_1})$ given a solution accuracy $\epsilon_1>0$. Step 4 invokes Algorithm \ref{algo:MN}, and we have already analyzed the time complexity of Algorithm \ref{algo:MN}. With the aforementioned time complexity of Algorithm \ref{algo:MN}, the overall complexity is $\mathcal{O}(K(i_0+1)N^{4.5}\log\frac{1}{\epsilon_1}+Ki_0(j+1)N)$.

\section{Experiments} \label{sec:experiments}

We will conduct extensive experiments to illustrate the effectiveness of our proposed algorithm. We first present the parameter setting of our experiments in Section~\ref{sec-exp-Parameter}, and then report various experimental results in other subsections.

\subsection{Parameter Setting} \label{sec-exp-Parameter}
For most experimental settings, we use the settings used in \cite{yang2021energy}. In the experiments, $N$ denoting the number of devices is 50. The devices are uniformly located in a circular area of size $500$ m $\times$ $500$ m and the center is a base station. The channel's pass loss is modeled as $128.1 + 37.6\log(\texttt{distance})$ along with the standard deviation of shadow fading, which is $8$ dB, and the unit of \texttt{distance} is kilometer. The power spectral density of Gaussian noise $N_0$ is $-174$ dBm/Hz.

Additionally, the default number of local iterations $R_l$ is set as 10 and the number of global aggregations $R_g$ is 400. 
The data size for each device to upload is set as $28.1$ kbits.  
Besides, each device has $500$ samples. The number of CPU cycles per sample, $c_n$, is chosen randomly from $[1, 3]\times 10^4$.
The effective switched capacitance $\kappa$ is $10^{-28}$.
The maximum frequency $f_n^{max}$ and the maximum transmission power $p_n^{max}$ for all devices are $2$ GHz and $12$ dBm, respectively. The minimum transmission power $p_n^{min}$ is set as $0$ dBm. The total bandwidth $B$ is $20$ MHz.

\subsection{How does Our Optimization Algorithm Perform with respect to System Parameters?} \label{subsec:adj_weight_param}

{\color{black} In this section, we study the effect of system parameters on our optimization problem. The system parameters include: weight parameters, the number of devices, global rounds \& local iterations, and the radius of the circular area for generating different devices. }

\begin{figure}
\centering
\begin{subfigure}{.25\textwidth}
  \centering
  \includegraphics[width=1\linewidth]{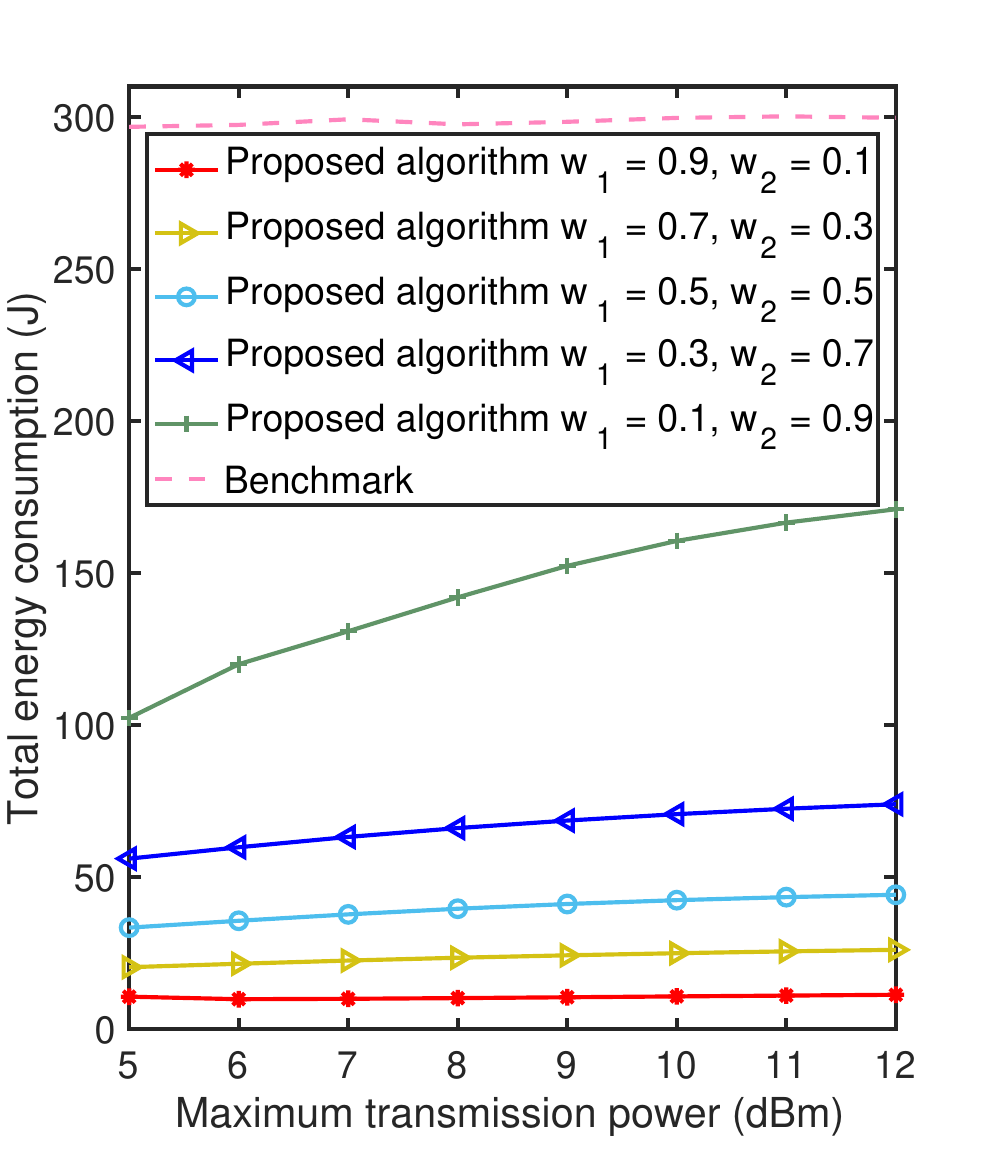}\vspace{-8pt}
    \caption{Total energy consumption}
     \label{fig:total_e_different_p}
\end{subfigure} \hspace{-15pt}
\begin{subfigure}{.25\textwidth}
  \centering
  \includegraphics[width=1\linewidth]{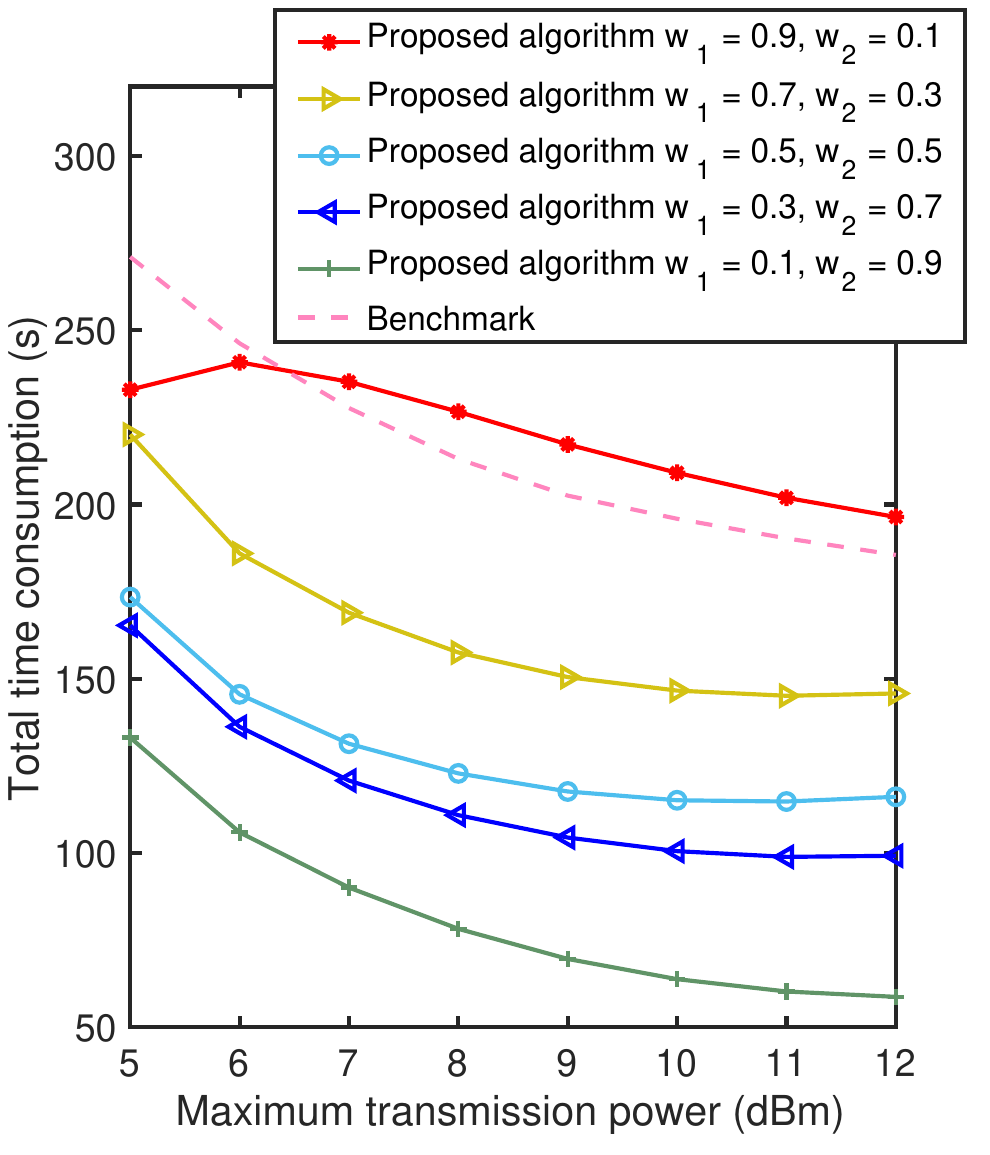}\vspace{-8pt}
    \caption{Total delay\vspace{-8pt}}
    \label{fig:total_time_different_p}
\end{subfigure}\vspace{-5pt}
\caption{Experiments varying maximum transmit power limits.\vspace{-10pt}}
\end{figure}

{\color{black}\textbf{Weight parameters}.} In our optimization problem, we have two weight parameters $w_1$ and $w_2$. The sum of them always equals to $1$. If $w_1$ is greater than $w_2$, energy consumption will take a more dominant position in our optimization problem. Analogously, if $w_2$ is larger than $w_1$, it is more important to optimize time instead of energy.

 In Fig. \ref{fig:total_e_different_p}, Fig. \ref{fig:total_time_different_p}, Fig. \ref{fig:total_e_different_f} and Fig. \ref{fig:total_time_different_f}, we compare five pairs of $(w_1, w_2) = (0.9, 0.1), (0.7, 0.3), (0.5, 0.5), (0.3, 0.7), (0.1, 0.9)$ with a benchmark algorithm. There are six lines in those figures. $(w_1, w_2) = (0.9, 0.1)$ and $(0.7, 0.3)$ are to simulate devices under a low-battery condition, so the weight parameter $w_1$ of the energy consumption is larger than the weight parameter $w_2$ of the time consumption. $(w_1, w_2) = (0.5, 0.5)$ represents the normal scenario, which means no preference for the time or energy. Besides, $(w_1, w_2) = (0.1, 0.9)$ and $(0.3, 0.7)$ stand for the time-sensitive scenario, so $w_2$ is larger than $w_1$. In brief, the algorithms in our comparison include:
\begin{itemize}
    \item Our proposed algorithm with weight parameters $(w_1 = 0.9, w_2 = 0.1)$, $(w_1 = 0.7, w_2 = 0.3)$, $(w_1 = 0.5, w_2 = 0.5)$, $(w_1 = 0.3, w_2 = 0.7)$ and $(w_1 = 0.1, w_2 = 0.9)$.
    \item The benchmark algorithm: 1) When comparing energy consumption and completion time at different maximum transmission power limits, for the $n$-th device, randomly select the CPU frequency $f_n$ from 0.1 to 2 GHz and set $p_n = p^{max}$, $B_n = \frac{B}{N}$. 2) When comparing at different maximum CPU frequencies, randomly select the transmission power $p_n$ between 0 and 12 dBm and set $f_n = f^{max}$, $B_n = \frac{B}{N}$.
\end{itemize}
To be more general, we run our algorithm at each pair of weight parameters 100 times and take the average value of energy and time consumption. At each time, we will generate 50 users randomly. {\color{black}Readers may wonder why the simulation plots do not show confidence intervals.  This is because at each time, we only generate users randomly. For the same group of users, our algorithm will produce the same result for each operation if we do not change any system parameter. We only want to observe the overall performance of our optimization algorithm at different pairs of weight parameters $(w_1, w_2)$.}

Fig. \ref{fig:total_e_different_p} and Fig. \ref{fig:total_time_different_p} show the results of total energy consumption and total time consumption with five pairs of $(w_1, w_2)$ at different maximum transmit power limits, respectively.
It can be observed clearly that when $w_1$ becomes larger and $w_2$ decreases, the total energy consumption becomes smaller and the total time becomes larger. 
The reason is that if $w_1$ (resp., $w_2$) increases, our optimization algorithm will pay more attention to minimizing the energy cost (resp., time consumption). 

Additionally, in Fig. \ref{fig:total_e_different_p}, 
the green, dark blue, light blue, yellow and red lines are all below the benchmark line with a wide gap, which means our algorithm obtains better total energy consumption even at $w_1 = 0.1$ (this means the algorithm mainly focuses on minimizing the total time consumption instead of energy).

Correspondingly, Fig. \ref{fig:total_time_different_p} reveals that our time optimization is still better than the benchmark except for $w_1=0.9$ and $w_2 = 0.1$ after $p^{max} = 7$ dBm. We analyze that it is caused by finding a better solution to minimize the total energy consumption through sacrificing the total time cost.

\begin{figure}
\centering
\begin{subfigure}{.25\textwidth}
  \centering
  \includegraphics[width=1\linewidth]{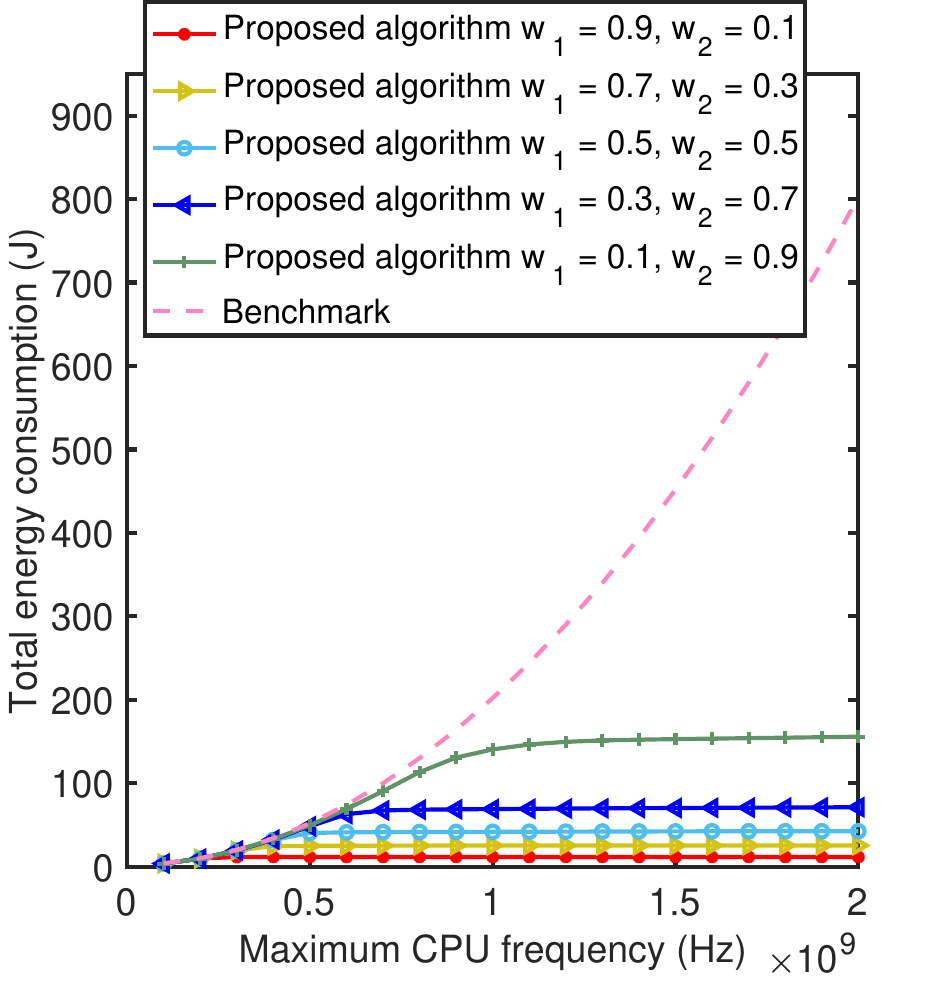}\vspace{-8pt}
    \caption{Total energy consumption}
    \label{fig:total_e_different_f}
\end{subfigure} \hspace{-15pt}
\begin{subfigure}{.25\textwidth}
  \centering
  \includegraphics[width=1\linewidth]{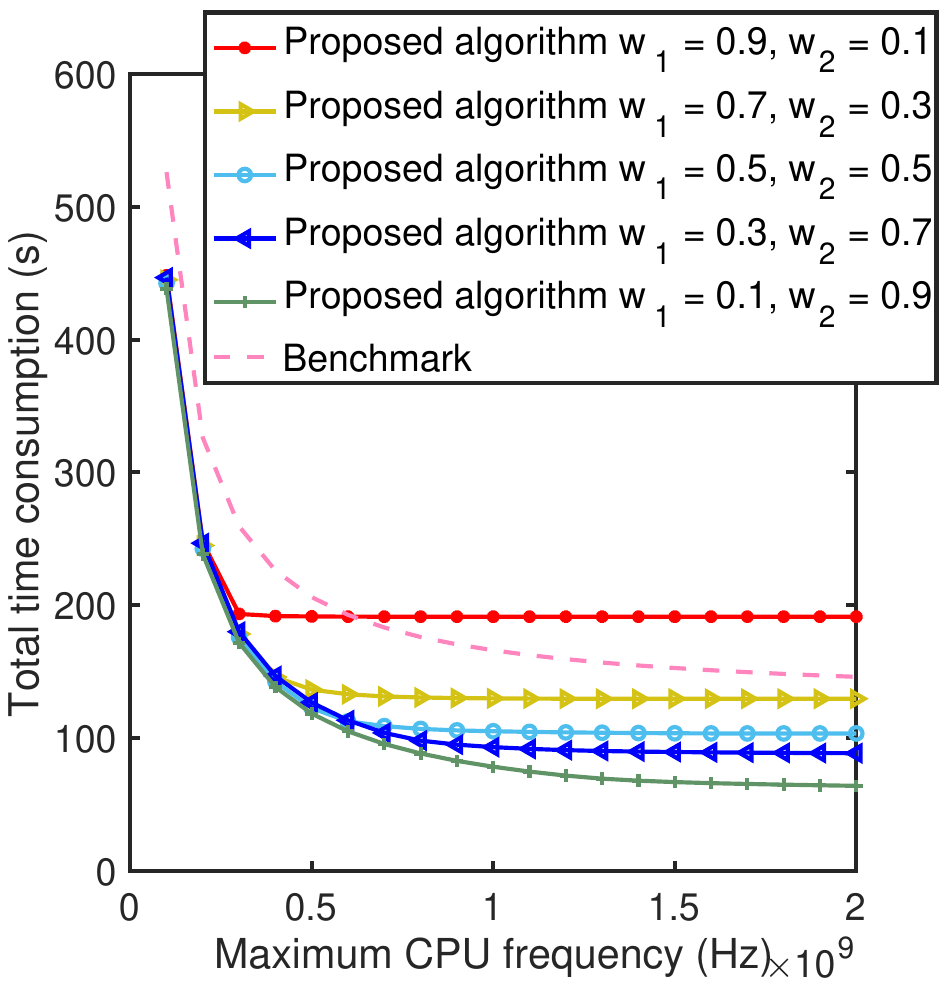}\vspace{-8pt}
    \caption{Total delay}\vspace{-8pt}
    \label{fig:total_time_different_f}
\end{subfigure}\vspace{-3pt}
\caption{Experiments with different maximum CPU frequencies.\vspace{-10pt}}
\end{figure}

    

    

Fig. \ref{fig:total_e_different_f} and Fig. \ref{fig:total_time_different_f} compare the total energy consumption and delay at different maximum CPU frequencies. Obviously, with the increase of the maximum CPU frequency, energy consumption of the benchmark algorithm is rising, which can be seen in Fig. \ref{fig:total_e_different_f}. Intuitively, if the CPU frequency is set at the maximum, the time consumption will be reduced. Therefore, in Fig. \ref{fig:total_time_different_f}, the total time consumption of the benchmark algorithm is decreasing as CPU frequency increases. However, it only performs better than our algorithm when $w_1=0.9$ and $w_2 = 0.1$ after the maximum CPU frequency $=0.6$ GHz. The reason is our algorithm mainly focuses on optimizing the energy when $w_1 = 0.9$ and $w_2 = 0.1$, so our algorithm's energy consumption is the smallest in Fig. \ref{fig:total_e_different_f}.

Besides, five lines related to our algorithm in Fig. \ref{fig:total_e_different_f} and Fig. \ref{fig:total_time_different_f} all enter a stationary phase after a certain maximum CPU frequency. This is because we have already found the optimal frequency for each device at that weight parameters. Thus, there will not be any change with the increasing $f^{max}$.

{\color{black} \textbf{The number of devices}. We also explore the effect of the number of devices on the total energy consumption and the time consumption.
We assume the total number of samples is 25000 and distribute all the data equally to each device.

It can be observed in Fig.~\ref{fig:total_e_different_devices} that with the increase of the number of devices, the total energy consumption decreases. This is caused by the decreasing number of samples allocated to each device as the number of devices increases, which saves the computation energy. In Fig.~\ref{fig:total_time_different_devices}, the overall trend is decreasing. As we have mentioned, because we increase the number of devices, the number of samples assigned to each device decreases, and the computation time cost decreases. While, for the green line at the bottom in Fig.~\ref{fig:total_time_different_devices}, the total delay increases slightly as the number of devices increases. Since our algorithm optimizes a weighted sum of the energy and delay, sometimes the increase in the delay is to achieve lower energy consumption.
}

\begin{figure}
\centering
\hspace{-8pt}\begin{subfigure}{.25\textwidth}
  \centering
  \includegraphics[width=1\linewidth]{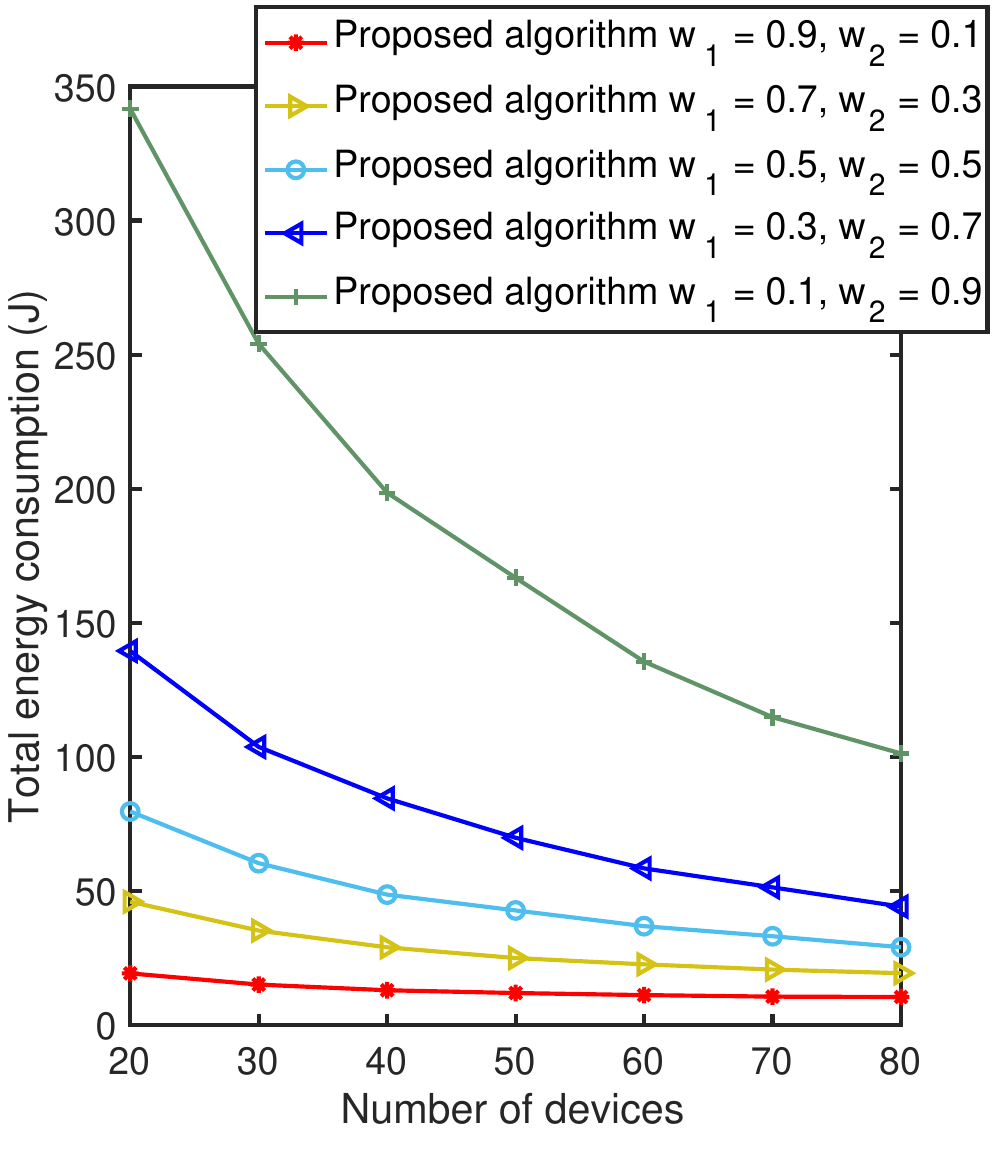}\vspace{-8pt}
    \caption{Total energy consumption}
    \label{fig:total_e_different_devices}
\end{subfigure} \hspace{-8pt}
\begin{subfigure}{.25\textwidth}
  \centering
  \includegraphics[width=1\linewidth]{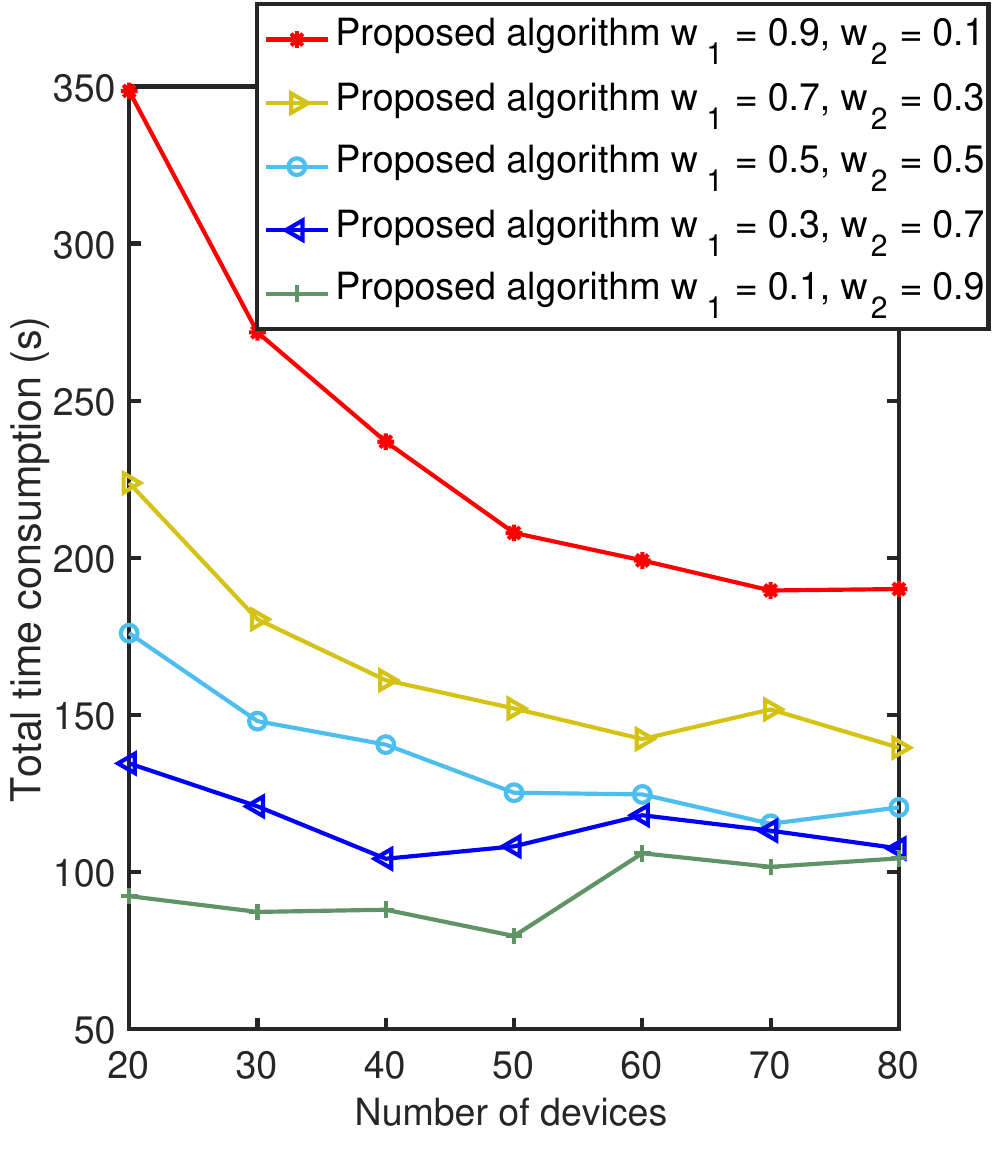}\vspace{-8pt}
    \caption{Total delay\vspace{-8pt}}
    \label{fig:total_time_different_devices}
\end{subfigure}\vspace{-10pt}
\caption{Experiments with different numbers of devices.\vspace{-20pt}}
\end{figure}



{\color{black} \textbf{The radius of the circular area}. 
Fig.~\ref{fig:total_e_different_cir} and Fig.~\ref{fig:total_time_different_cir} show the change in energy and time consumption as the area used for generating users randomly increases. There are three lines in each figure representing  20, 50, 80 devices, respectively.
Fig.~\ref{fig:total_e_different_cir} does not reveal a clear correlation between energy consumption and the radius. Since the number of samples of each device is always 500 in this setting, as the number of devices increases, the overall computation energy consumption must increase unless the CPU frequency is set lower.
Whereas, if the CPU frequency/transmission power is set lower, the computation time/transmission time cost becomes larger. This explains why the total energy consumption of these three lines suddenly drops at $0.9$ km. Since the CPU frequency/transmission power is set lower, the total energy consumption decreases at $0.9$ km.
Fig.~\ref{fig:total_time_different_cir} clearly shows the time consumption and the radius are positively correlated. Intuitively, the transmission time will increase as the radius increases if the transmission power is kept constant. 
To save the time consumption, our optimization algorithm cannot always put the energy consumption first.
Therefore, at $1.3$ km, the CPU frequency and transmission power much be set larger than the previous settings so that the time consumption will not rise much higher. 
}

\begin{figure}
\centering
\begin{subfigure}{.255\textwidth}
  \centering
  \includegraphics[width=1\linewidth]{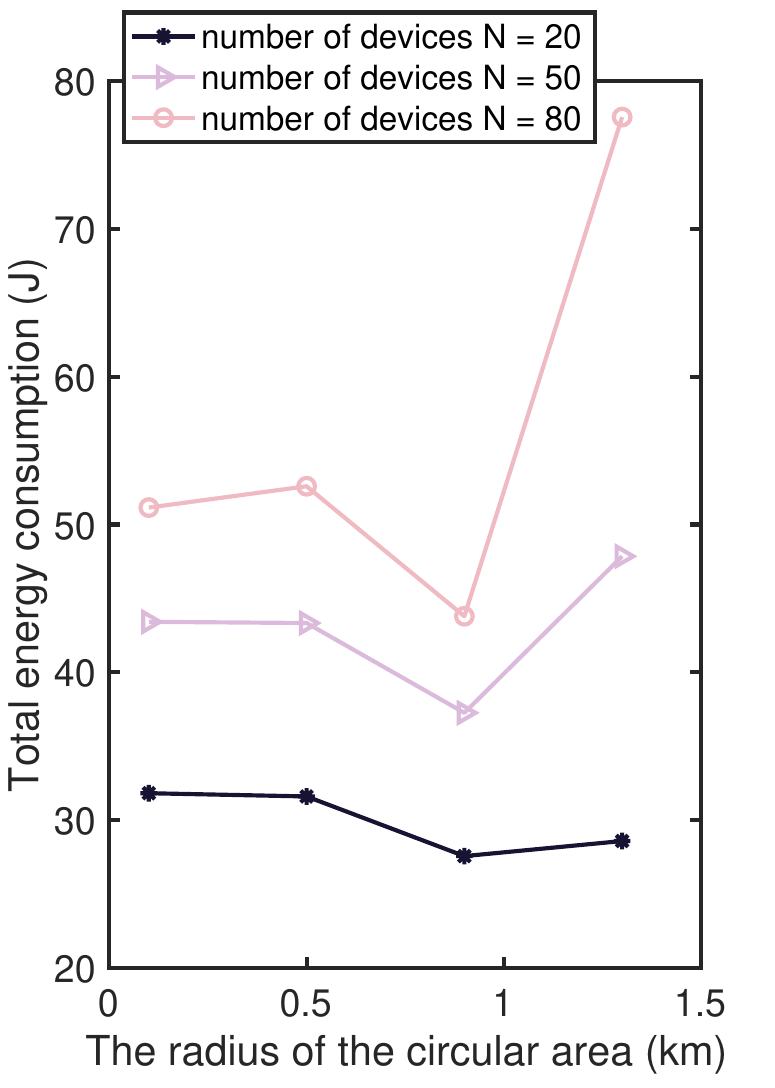}\vspace{-8pt}
    \caption{Total energy consumption}
    \label{fig:total_e_different_cir}
\end{subfigure} \hspace{-15pt}
\begin{subfigure}{.245\textwidth}
  \centering
  \includegraphics[width=1\linewidth]{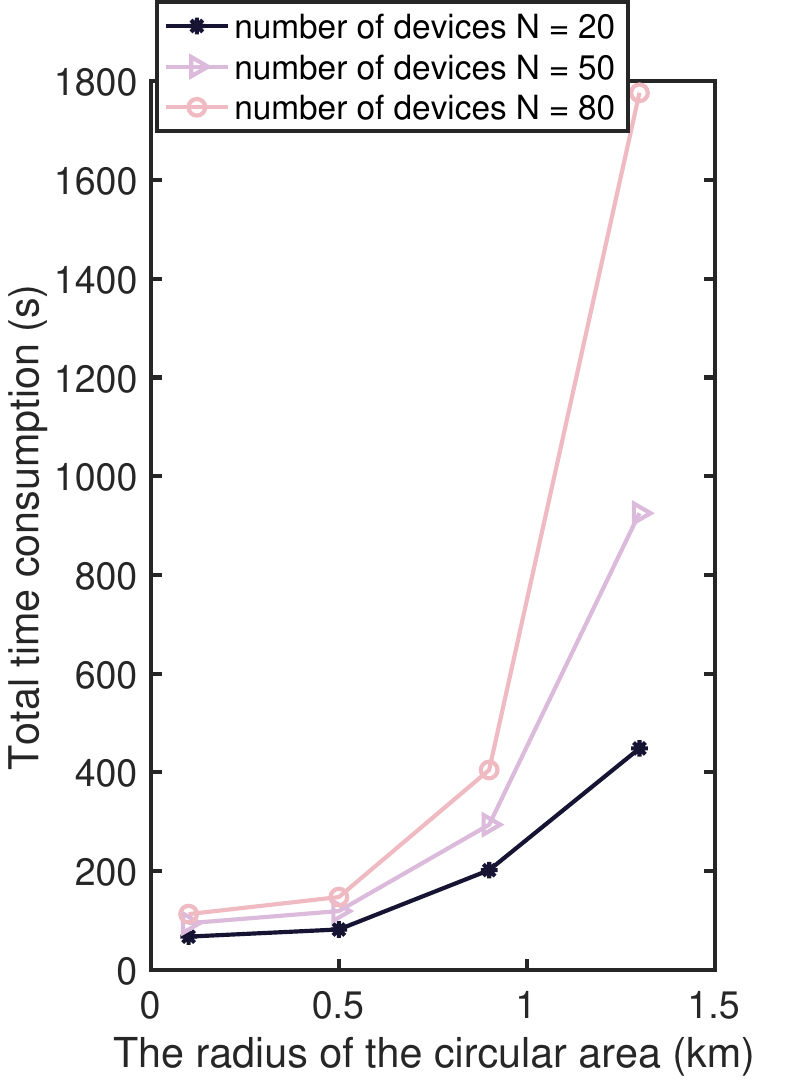}\vspace{-8pt}
    \caption{Total delay\vspace{-8pt}}
    \label{fig:total_time_different_cir}
\end{subfigure}\vspace{-5pt}
\caption{Experiments with different radii of the circular area. $w_1 = w_2 = 0.5$.\vspace{-20pt}}
\end{figure}



{\color{black} \textbf{The communication/computation rounds}.
We choose five kinds of global communication rounds $R_g$: 50, 100, 200, 300, 400. Fig.~\ref{fig:total_e_different_iter} and Fig.~\ref{fig:total_time_different_iter} show how the energy consumption and time cost change with the increase in the number of local iterations $R_l$ per global round. It can be seen that as $R_l$ and $R_g$ become larger, the total energy and time consumption both increase. They are positively correlated. 

}
\textbf{The number of samples on each device}. If we keep all the system parameters consistent and just vary the number of samples on each device, we find that $D_n$ is positively correlated with time and energy. This result is consistent with the analytical expression of the optimization objective. Due to the space limitation, we do not provide experiments where different devices have various numbers of samples.



\subsection{Joint Communication \& Computation Optimization vs. Communication/Computation Optimization Only}
To study the domination relationship between the transmission energy and computation energy, we compare our algorithm with the following schemes at the aspect of total energy consumption at different maximum completion time limits. The maximum transmission power is $p^{max} = 10$ dBm. {\color{black}We denote the maximum delay of the whole training process as $T$.} $T$ is fixed to compare these three schemes, and we set $w_1=1$ and $w_2 = 0$ in our algorithm.
\begin{itemize}
    \item \textbf{Communication optimization only}: Each device's computation frequency is set as a fixed value. 
    We optimize only the transmission power and bandwidth allocated to each device. To guarantee there is a feasible solution, we set the fixed frequency value for each device as $\frac{R_gR_lc_nD_n}{\mathcal{T}-R_g\max(d_n/r_n)}$, which is derived from constraint (\ref{constra:time}), and $r_n$ is calculated from the initial bandwidth and transmission power.
    \item \textbf{Computation optimization only}: Each device's transmission power and bandwidth are fixed and we optimize only  the CPU frequency. The transmission power and bandwidth of device $n$ are set as $p_n = p^{max}$ and $B_n = \frac{B}{2N}$. Such setting gives better simulation results than letting $p_n $ be $ \frac{p^{max}}{2} $ and/or $B_n $ be $ \frac{B}{N}$, and is also used in the source code of \cite{yang2021energy}. 
\end{itemize}

\begin{figure}
\centering
\begin{subfigure}{.25\textwidth}
 \centering
  \includegraphics[width=1\linewidth]{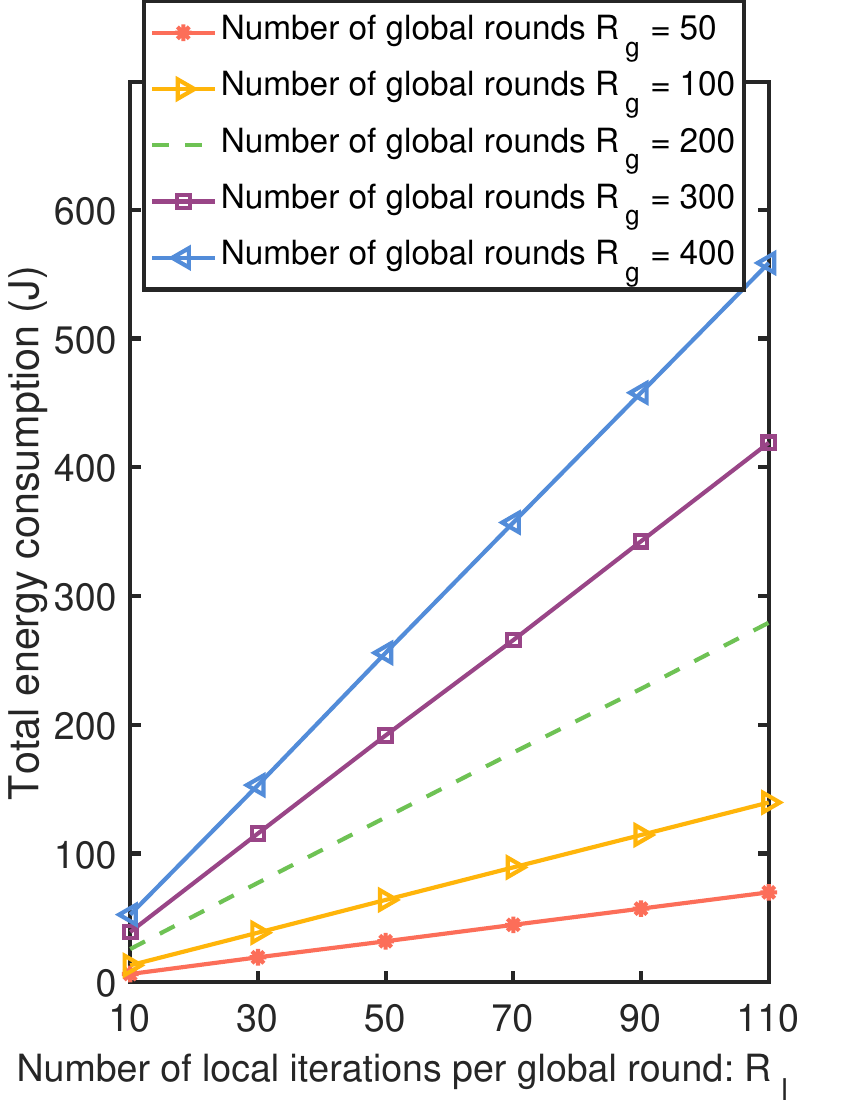}\vspace{-6pt}
    \caption{Total energy consumption}
    \label{fig:total_e_different_iter}
\end{subfigure} \hspace{-15pt}
\begin{subfigure}{.25\textwidth}
  \centering
  \includegraphics[width=1\linewidth]{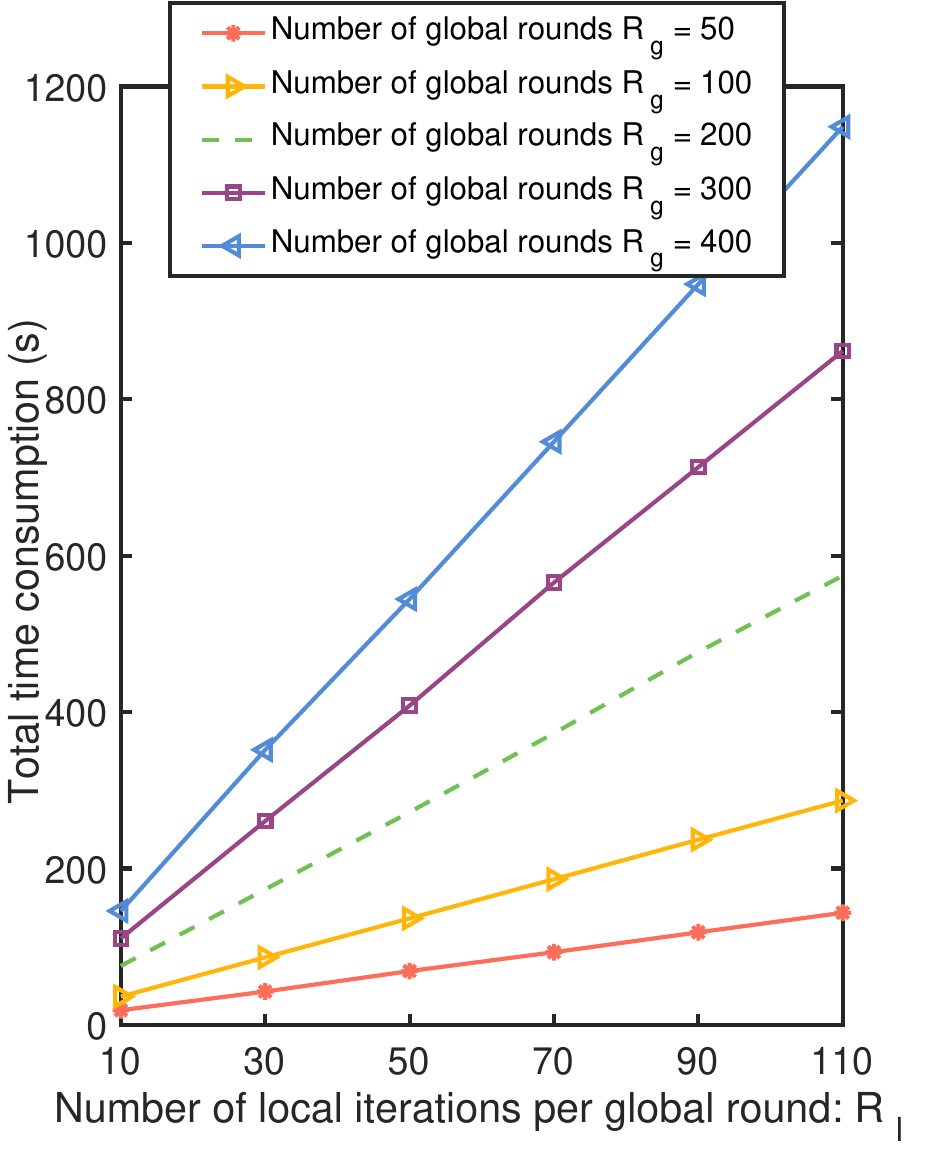}\vspace{-6pt}
    \caption{Total delay\vspace{-6pt}}
    \label{fig:total_time_different_iter}
\end{subfigure}\vspace{-6pt}
\caption{Experiments with different computation rounds. $w_1 = w_2 = 0.5$.\vspace{-15pt}}
\end{figure}


Apparently, it can be observed from Fig. \ref{fig:joint_commu_comp_comparison} that our proposed algorithm performs better than the other two schemes. In addition, transmission energy takes the lead of the total energy consumption. Only optimizing the computation energy cannot exceed the performance of only optimizing the transmission energy. 
Furthermore, we are able to discover the conflicting relationship between maximum completion time and total energy consumption. As maximum completion time increases, there will be better solutions to reduce total energy consumption. However, if we relax the constraint of the maximum completion time $T$ too much, the optimal solution will reach the minimum limit of transmission power and CPU frequency. That is the reason why the gap of these three lines becomes small when $T$ is large.

\subsection{{\color{black}Superiority} of Our  Algorithm over Existing Work}
{\color{black}Prior to our study,  the state of the art for FL with FDMA     is \cite{yang2021energy} by Yang \emph{et~al.} on energy optimization subject to the delay constraint.}
In the comparison below, we refer to their method as Scheme 1. Scheme 1 has two more optimization variables, which are local accuracy and the delay. To be fair, we only use the method (Algorithm 3 in \cite{yang2021energy}) that they use to optimize the transmission power, the bandwidth and the CPU frequency of each user. Since the optimization of the local accuracy and delay is a separate section in Scheme 1, it will not affect the comparison between our method and Scheme 1.

The maximum completion time is not included in the objective function of the optimization problem in Scheme~1. Instead, it appears as a constraint. Therefore, to be fair, we set a fixed maximum completion time $T$ to compare our optimization algorithm with Scheme 1. With a fixed maximum completion time $T$, our minimization problem (\ref{equa:min2}) is under the condition that $w_1 = 1$ and $w_2 = 0$. Besides, the initial transmission power and bandwidth of device $n$ are set as $p_n = p^{max}$, $B_n = \frac{B}{2N}$. The comparison result is shown in Fig. \ref{fig:comparison_e}.
From Fig. \ref{fig:comparison_e}, we can find that each line of our proposed algorithm is below the corresponding line of Scheme 1, which reveals that our algorithm's performance is better. Additionally, as the maximum completion time $T$ decreases, the total energy consumption gap between our algorithm and Scheme 1 gets larger, thereby indicating that our proposed algorithm leads to a better solution when the maximum completion time $T$ is small. Therefore, our proposed algorithm will show more superiority for the training scenario with a restricted delay.

At $T_3 = 150$ s, the line of our algorithm first decreases and then slightly increases. This is because {\color{black}our initial transmission power is initialized as $p_n = p^{max}$  at the start of our optimization algorithm}. After finding the optimal solution at a specific $p^{max}$, the subsequent total energy consumption will slightly increase due to the different starting point from the previous step as $p^{max}$ continues increasing.

\begin{figure}
\centering
\hspace{-5pt}\begin{minipage}{.21\textwidth}
  \centering
  \includegraphics[width=1\linewidth]{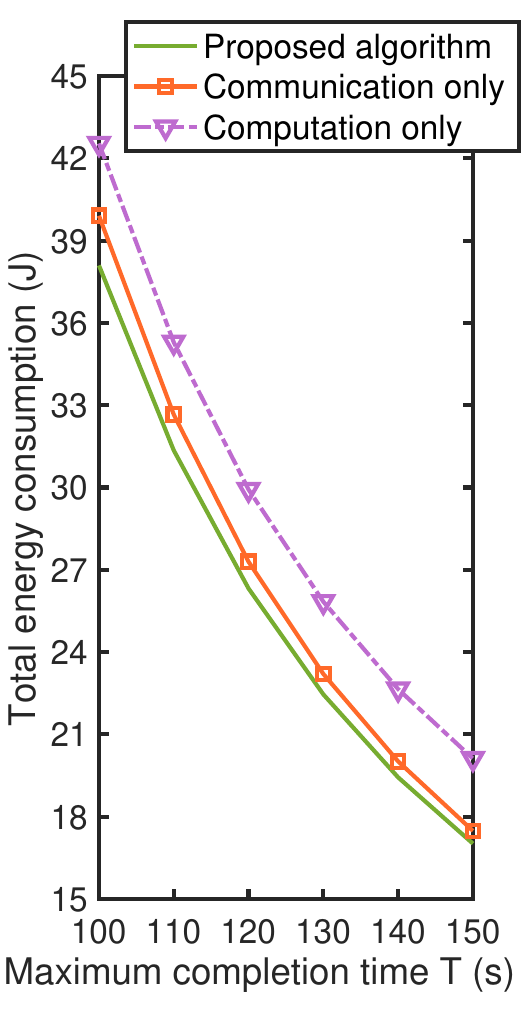}\vspace{-6pt}
    \caption{Total energy consumption at different maximum completion time limits.\vspace{-10pt}}
    \label{fig:joint_commu_comp_comparison}
\end{minipage}\hspace{-2pt}
\begin{minipage}{.285\textwidth}
  \centering
 \includegraphics[width=1\linewidth]{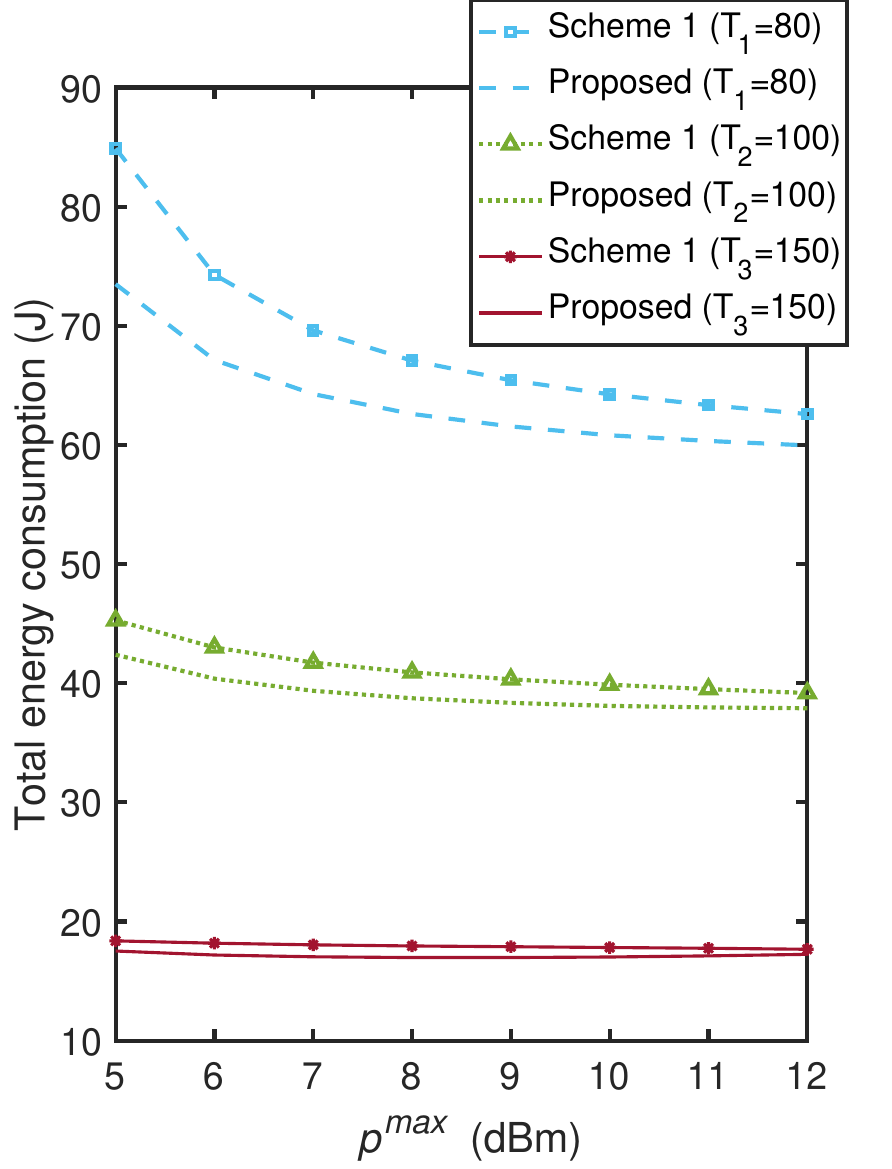}\vspace{-6pt}
    \caption{Total energy consumption at different maximum transmit power limits with fixed maximum completion time $T_1 = 80$ s, $T_2 = 100$ s and $T_3 = 150$ s. The number of users is $N = 50$.\vspace{-10pt}}
    \label{fig:comparison_e}
\end{minipage}
\end{figure}

%
%


{

\renewcommand{\baselinestretch}{.95}


}
\renewcommand{\baselinestretch}{1}

\normalsize

\renewcommand{\theequation}{A.\arabic{equation}}
  \setcounter{equation}{0}  

\section*{Appendix A: Proof of Lemma \ref{lemma:requisite_fra_prog}} \label{appen:proof_lemma_requis}
It is obvious that $K(p_n, B_n) = p_nd_n$ is an affine function with $B_n=0$. Thus, $K(p_n, B_n)$ is convex.


For any vector $\bm{x}=[x_{1}, x_{2}]^T \in \mathbb{R}^{2}$, we have
$
   \bm{x}^THessian(G)\bm{x} = -\frac{(x_{1}g_nB_n-x_{2}g_np_n)^2}{B_n^3N_0^2(\frac{g_np_n}{B_nN_0}+1)^2 \ln{2}} \le 0.$
Because $\bm{x}^T*Hessian(G)*\bm{x} \le 0$ for all $\bm{x}$, $Hessian(G)$ is a negative semidefinite matrix. Therefore, $G(p_n, B_n)$ is a concave function. \textbf{Lemma \ref{lemma:requisite_fra_prog}} is proved.

\section*{Appendix B: Proof of Theorem \ref{theor:express_B_p}} \label{appen:proof_express_bp}
We derive the relationship between $\bm{p}$ and $\bm{B}$ from (\ref{lag:partial_p}), so 
\begin{align} \label{equa:p_B_relation}
  \textstyle{  p_n = (\frac{(\nu_n\beta_n+\tau_n)g_n}{N_0d_n\nu_n\ln2}-1)\frac{N_0B_n}{g_n}}.
\end{align}
From (\ref{lag:data_rate}) and (\ref{equa:p_B_relation}), we have
$B_n = \frac{r_n^{min}}{\log_2(\frac{(\nu_n\beta_n+\tau_n)g_n}{N_0d_n\nu_n\ln2})},~if~ \tau_n \neq 0. $.
Substituting (\ref{equa:p_B_relation}) in (\ref{equal:sp2_v2_lagran}), we replace $p_n$ by $B_n$ in the Lagrangian function:
$
L_3(B_n,  \tau_n, \mu) = B_n(\frac{\nu_n\beta_n+\tau_n}{\ln2}-\frac{\nu_nd_nN_0}{g_n}-(\nu_n\beta_n+\tau_n)\times   \log_2(\frac{(\nu_n\beta_n+\tau_n)g_n}{N_0d_n\nu_n\ln2})+\mu)+\sum_{n=1}^N r_n^{min}* \tau_n-\mu B.$
The corresponding dual problem writes as 
\begin{align} \label{equa:subp2_dual}
\max_{\tau_n, \mu} ~&  \textstyle{g(\tau_n, \mu) = \sum_{n=1}^N r_n^{min}*\tau_n-\mu B} \\
\text{subject to}~& \textstyle{\frac{a_n}{\ln2}\hspace{-1pt}-\hspace{-1pt}j_n\hspace{-1pt}-\hspace{-1pt}a_n\log_2(\frac{a_n}{j_n\ln2})\hspace{-1pt}+\hspace{-1pt}\mu = 0}, \mu \ge 0, ~\tau_n \ge 0, \label{subp2_constra1} 
\end{align} 
where $a_n = \nu_n\beta_n+\tau_n$ and $j_n = \frac{\nu_nd_nN_0}{g_n}$.
Besides, constraint (\ref{subp2_constra1}) derives the relationship between $\tau_n$ and $\mu$: 
\begin{align}\tau_n = \textstyle{\frac{(\mu-j_n)\ln2}{W(\frac{\mu-j_n}{e*j_n})}-\nu_n\beta_n,} \label{tau_mu_relation}
\end{align}
where $W(\cdot)$ is Lambert $W$ function and $e$ is the mathematical constant. 
Given (\ref{tau_mu_relation}), the dual function is simplified as 
\begin{align}
\textstyle{g(\mu) = \sum_{n=1}^N r_n^{min}*(\frac{(\mu-j_n)\ln2}{W(\frac{\mu-j_n}{e*j_n})}-\nu_n\beta_n)-\mu B.}
\end{align}
Take the first derivative and we get $g'(\mu) = \sum_{n=1}^N\frac{r_n^{min}\ln2}{W(\frac{\mu-j_n}{e*j_n})}(1-\frac{1}{W(\frac{\mu-j_n}{e*j_n})+1})-B$. 
Additionally, $g''(\mu)\hspace{-1pt}=\hspace{-1pt}-\sum_{n=1}^N \frac{r_n^{min}\ln2W(\frac{\mu-c_n}{e*c_n})}{(\mu-j_n)(1+W(\frac{\mu-j_n}{e*j_n}))^3} \hspace{-1pt}\le \hspace{-1pt}0$ due to $\frac{\mu-j_n}{e*j_n}\hspace{-1pt}\ge\hspace{-1pt} -\frac{1}{e}$.

Therefore, $g'(\mu)$ is a monotone decreasing function, and $g(\mu)$ is concave, so $g(\mu)$ reaches maximum when $g'(\mu) = 0$. Bisection method can be used to find $\mu$ satisfying $g'(\mu) = 0$. Naturally, $\tau_n = \max( \text{(\ref{tau_mu_relation})}, 0)$.
Note $\tau_n \neq 0$ implies $B_n = \frac{r_n^{min}}{\log_2(\frac{(\nu_n\beta_n+\tau_n)g_n}{N_0d_n\nu_n\ln2})}$. We denote the sum of bandwidth of these devices by $B_{\tau_n\neq0}$ and the set of these devices by $\mathcal{N}_{\tau_n\neq0}$, with $N_{\tau_n\neq0}$ being the cardinality of $\mathcal{N}_{\tau_n\neq0}$.

We use (\ref{equa:p_B_relation}) to replace $p_n$ by $B_n$ in problem \textbf{\textit{SP2\_v2}} of (\ref{equa:subproblem2_lagrangian_min}), and remove those devices whose bandwidth is calculated in the previous step and numbering the left devices. Then the new problem becomes
\begin{align} \label{SP2_v3}
   & {\min_{B_n}\sum_{n=1}^{N-N_{\tau_n \neq 0}}  \hspace{-2pt}(\frac{\nu_n\beta_n}{\ln2}\hspace{-1pt}-\hspace{-1pt}\frac{N_0d_n\nu_n}{g_n}\hspace{-1pt}-\hspace{-1pt}\nu_n\beta_n  \log_2(\frac{\beta_ng_n}{N_0d_n\ln2}))B_n} \\
\hspace{-1pt}&\text{subject to}, \notag \\
 & { p_n^{min} \le (\frac{\beta_ng_n}{N_0d_n\ln2}-1)\frac{N_0B_n}{g_n} \le p_n^{max}, ~\forall n   \in \mathcal{N}\backslash\mathcal{N}_{\tau_n\neq0},} \nonumber \\
& {\sum_{n=1}^{N-N_{\tau_n\neq0}} B_n \le B-B_{\tau_n\neq0},}\nonumber 
\end{align}
where $\mu$ and $\tau_n$ have already been solve in previous steps.

Then the problem \textbf{\textit{SP2\_v2}} becomes a convex optimization problem with just one variable. Thus, we can use the convex problem solver CVX \cite{grant2014cvx} to solve problem (\ref{SP2_v3}).








\begin{thebibliography}{10}
\providecommand{\url}[1]{#1}
\csname url@samestyle\endcsname
\providecommand{\newblock}{\relax}
\providecommand{\bibinfo}[2]{#2}
\providecommand{\BIBentrySTDinterwordspacing}{\spaceskip=0pt\relax}
\providecommand{\BIBentryALTinterwordstretchfactor}{4}
\providecommand{\BIBentryALTinterwordspacing}{\spaceskip=\fontdimen2\font plus
\BIBentryALTinterwordstretchfactor\fontdimen3\font minus
  \fontdimen4\font\relax}
\providecommand{\BIBforeignlanguage}[2]{{%
\expandafter\ifx\csname l@#1\endcsname\relax
\typeout{** WARNING: IEEEtran.bst: No hyphenation pattern has been}%
\typeout{** loaded for the language `#1'. Using the pattern for}%
\typeout{** the default language instead.}%
\else
\language=\csname l@#1\endcsname
\fi
#2}}
\providecommand{\BIBdecl}{\relax}
\BIBdecl
{\small
\bibitem{li2021survey}
Q.~Li, Z.~Wen, Z.~Wu, S.~Hu, N.~Wang, Y.~Li, X.~Liu, and B.~He, ``A survey on
  federated learning systems: vision, hype and reality for data privacy and
  protection,'' \emph{IEEE Trans. on Knowledge \& Data Engineering}, 2021.

\bibitem{mcmahan2017communication}
B.~McMahan, E.~Moore, D.~Ramage, S.~Hampson, and B.~A. y~Arcas,
  ``Communication-efficient learning of deep networks from decentralized
  data,'' in \emph{Artificial Intelligence and Statistics}, 2017, pp.
  1273--1282.

\bibitem{luo2020hfel}
S.~Luo, X.~Chen, Q.~Wu, Z.~Zhou, and S.~Yu, ``{HFEL}: Joint edge association
  and resource allocation for cost-efficient hierarchical federated edge
  learning,'' \emph{IEEE Trans. on Wireless Comm.}, vol.~19, no.~10, pp.
  6535--6548, 2020.

\bibitem{luping2019cmfl}
W.~Luping, W.~Wei, and L.~Bo, ``{CMFL}: Mitigating communication overhead for
  federated learning,'' in \emph{IEEE ICDCS}, 2019, pp. 954--964.

\bibitem{han2020adaptive}
P.~Han, S.~Wang, and K.~K. Leung, ``Adaptive gradient sparsification for
  efficient federated learning: An online learning approach,'' in \emph{IEEE
  ICDCS}, 2020, pp. 300--310.

\bibitem{chen2021communication}
C.~Chen, H.~Xu, W.~Wang, B.~Li, B.~Li, L.~Chen, and G.~Zhang,
  ``Communication-efficient federated learning with adaptive parameter
  freezing,'' in \emph{IEEE ICDCS}, 2021.

\bibitem{yang2021energy}
Z.~Yang, M.~Chen, W.~Saad, C.~S. Hong, and M.~Shikh-Bahaei, ``Energy efficient
  federated learning over wireless communication networks,'' \emph{IEEE Trans.
  on Wireless Comm.}, vol.~20, no.~3, pp. 1935--1949, 2021.

\bibitem{dinh2021federated}
C.~T. Dinh, N.~H. Tran, M.~N. Nguyen, C.~S. Hong, W.~Bao, A.~Y. Zomaya, and
  V.~Gramoli, ``Federated learning over wireless networks: Convergence analysis
  and resource allocation,'' \emph{IEEE/ACM Trans. on Networking}, vol.~29,
  no.~1, pp. 398--409, 2021.

\bibitem{jong2012efficient}
Y.~Jong, ``An efficient global optimization algorithm for nonlinear
  sum-of-ratios problem,'' \emph{Optimization Online}, 2012.

\bibitem{li2021talk}
L.~Li, D.~Shi, R.~Hou, H.~Li, M.~Pan, and Z.~Han, ``To talk or to work:
  Flexible communication compression for energy efficient federated learning
  over heterogeneous mobile edge devices,'' in \emph{IEEE INFOCOM}, 2021.

\bibitem{wang2019adaptive}
S.~Wang, T.~Tuor, T.~Salonidis, K.~K. Leung, C.~Makaya, T.~He, and K.~Chan,
  ``Adaptive federated learning in resource constrained edge computing
  systems,'' \emph{IEEE Journal on Selected Areas in Comm.}, vol.~37, no.~6,
  pp. 1205--1221, 2019.

\bibitem{zeng2021energy}
Q.~Zeng, Y.~Du, K.~Huang, and K.~K. Leung, ``Energy-efficient resource
  management for federated edge learning with {CPU-GPU} heterogeneous
  computing,'' \emph{IEEE Trans. on Wireless Comm.}, vol.~20, no.~12, pp.
  7947--7962, 2021.

\bibitem{luo2021cost}
B.~Luo, X.~Li, S.~Wang, J.~Huang, and L.~Tassiulas, ``Cost-effective federated
  learning design,'' in \emph{IEEE INFOCOM}, 2021.

\bibitem{yang2020delay}
Z.~Yang, M.~Chen, W.~Saad, C.~S. Hong, M.~Shikh-Bahaei, H.~V. Poor, and S.~Cui,
  ``Delay minimization for federated learning over wireless communication
  networks,'' in \emph{ICML Workshop}, 2020.

\bibitem{chen2020convergence}
M.~Chen, H.~V. Poor, W.~Saad, and S.~Cui, ``Convergence time optimization for
  federated learning over wireless networks,'' \emph{IEEE Trans. on Wireless
  Comm.}, vol.~20, no.~4, pp. 2457--2471, 2020.

\bibitem{zeng2020federated}
T.~Zeng, O.~Semiari, M.~Mozaffari, M.~Chen, W.~Saad, and M.~Bennis, ``Federated
  learning in the sky: Joint power allocation and scheduling with {UAV}
  swarms,'' in \emph{IEEE ICC}, 2020.

\bibitem{wang2021edge}
T.~Wang, Y.~Liu, X.~Zheng, H.-N. Dai, W.~Jia, and M.~Xie, ``Edge-based
  communication optimization for distributed federated learning,'' \emph{IEEE
  Trans. on Network Science \& Engineering}, 2021.

\bibitem{wang2021cflmec}
X.~Wang, X.~Zhong, Y.~Yang, and T.~Yang, ``{CFLMEC}: Cooperative federated
  learning for mobile edge computing,'' in \emph{IEEE ICC}, 2022.

\bibitem{9488818}
J.~Zhang, N.~Li, and M.~Dedeoglu, ``Federated learning over wireless networks:
  A band-limited coordinated descent approach,'' in \emph{IEEE INFOCOM}, 2021.

\bibitem{deng2021share}
Y.~Deng, F.~Lyu, J.~Ren, Y.~Zhang, Y.~Zhou, Y.~Zhang, and Y.~Yang, ``{SHARE}:
  Shaping data distribution at edge for communication-efficient hierarchical
  federated learning,'' in \emph{IEEE ICDCS}, 2021, pp. 24--34.

\bibitem{chen2020joint}
M.~Chen, Z.~Yang, W.~Saad, C.~Yin, H.~V. Poor, and S.~Cui, ``A joint learning
  and communications framework for federated learning over wireless networks,''
  \emph{IEEE Trans. on Wireless Comm.}, vol.~20, no.~1, pp. 269--283, 2020.

\bibitem{shi2020joint}
W.~Shi, S.~Zhou, Z.~Niu, M.~Jiang, and L.~Geng, ``Joint device scheduling and
  resource allocation for latency constrained wireless federated learning,''
  \emph{IEEE Trans. on Wireless Comm.}, vol.~20, no.~1, pp. 453--467, 2020.

\bibitem{he2020importance}
Y.~He, J.~Ren, G.~Yu, and J.~Yuan, ``Importance-aware data selection and
  resource allocation in federated edge learning system,'' \emph{IEEE Trans. on
  Vehicular Technology}, vol.~69, no.~11, pp. 13\,593--13\,605, 2020.

\bibitem{ren2020accelerating}
J.~Ren, G.~Yu, and G.~Ding, ``Accelerating dnn training in wireless federated
  edge learning systems,'' \emph{IEEE Journal on Selected Areas in Comm.},
  vol.~39, no.~1, pp. 219--232, 2020.

\bibitem{xu2020client}
J.~Xu and H.~Wang, ``Client selection and bandwidth allocation in wireless
  federated learning networks: A long-term perspective,'' \emph{IEEE Trans. on
  Wireless Comm.}, vol.~20, no.~2, pp. 1188--1200, 2020.

\bibitem{9148862}
L.~Liu, J.~Zhang, S.~Song, and K.~B. Letaief, ``Client-edge-cloud hierarchical
  federated learning,'' in \emph{IEEE ICC}, 2020.

\bibitem{feng2021design}
C.~Feng, Z.~Zhao, Y.~Wang, T.~Q. Quek, and M.~Peng, ``On the design of
  federated learning in the mobile edge computing systems,'' \emph{IEEE Trans.
  on Communications}, vol.~69, no.~9, pp. 5902--5916, 2021.

\bibitem{yang2019scheduling}
H.~H. Yang, Z.~Liu, T.~Q. Quek, and H.~V. Poor, ``Scheduling policies for
  federated learning in wireless networks,'' \emph{IEEE Trans. on
  Communications}, vol.~68, no.~1, pp. 317--333, 2019.

\bibitem{ren2020scheduling}
J.~Ren, Y.~He, D.~Wen, G.~Yu, K.~Huang, and D.~Guo, ``Scheduling for cellular
  federated edge learning with importance and channel awareness,'' \emph{IEEE
  Trans. on Wireless Comm.}, vol.~19, no.~11, pp. 7690--7703, 2020.

\bibitem{wang2020towards}
C.~Wang, Y.~Yang, and P.~Zhou, ``Towards efficient scheduling of federated
  mobile devices under computational and statistical heterogeneity,''
  \emph{IEEE Trans. on Parallel and Distributed Systems}, vol.~32, no.~2, pp.
  394--410, 2020.

\bibitem{deveaux2020orchestration}
D.~Deveaux, T.~Higuchi, S.~U{\c{c}}ar, C.-H. Wang, J.~H{\"a}rri, and
  O.~Altintas, ``On the orchestration of federated learning through vehicular
  knowledge networking,'' in \emph{IEEE Vehicular Networking Conference (VNC)},
  2020.

\bibitem{9448151}
L.~Cui, X.~Su, Y.~Zhou, and Y.~Pan, ``Slashing communication traffic in
  federated learning by transmitting clustered model updates,'' \emph{IEEE
  Journal on Selected Areas in Comm.}, vol.~39, no.~8, pp. 2572--2589, 2021.

\bibitem{rothchild2020fetchsgd}
D.~Rothchild, A.~Panda, E.~Ullah, N.~Ivkin, I.~Stoica, V.~Braverman,
  J.~Gonzalez, and R.~Arora, ``{FetchSGD}: Communication-efficient federated
  learning with sketching,'' in \emph{International Conference on Machine
  Learning}, 2020, pp. 8253--8265.

\bibitem{guo2021siren}
H.~Guo, H.~Wang, T.~Song, Y.~Hua, Z.~Lv, X.~Jin, Z.~Xue, R.~Ma, and H.~Guan,
  ``Siren: Byzantine-robust federated learning via proactive alarming,'' in
  \emph{Proceedings of the ACM Symposium on Cloud Computing}, 2021, p. 47–60.

\bibitem{malandrino2021toward}
F.~Malandrino and C.~F. Chiasserini, ``Toward node liability in federated
  learning: Computational cost and network overhead,'' \emph{IEEE
  Communications Magazine}, vol.~59, no.~9, pp. 72--77, 2021.

\bibitem{9751555}
J.~Wu, Q.~Xia, and Q.~Li, ``Efficient privacy-preserving federated learning for
  resource-constrained edge devices,'' in \emph{IEEE Conference on Mobility,
  Sensing and Networking (MSN)}, 2021, pp. 191--198.

\bibitem{wei2020federated}
K.~Wei, J.~Li, M.~Ding, C.~Ma, H.~H. Yang, F.~Farokhi, S.~Jin, T.~Q. Quek, and
  H.~V. Poor, ``Federated learning with differential privacy: Algorithms and
  performance analysis,'' \emph{IEEE Trans. on Information Forensics and
  Security}, vol.~15, pp. 3454--3469, 2020.

\bibitem{9355731}
R.~Zeng, S.~Zhang, J.~Wang, and X.~Chu, ``{FMore}: An incentive scheme of
  multi-dimensional auction for federated learning in {MEC},'' in \emph{IEEE
  ICDCS}, 2020, pp. 278--288.

\bibitem{zhou2021truthful}
R.~Zhou, J.~Pang, Z.~Wang, J.~C. Lui, and Z.~Li, ``A truthful procurement
  auction for incentivizing heterogeneous clients in federated learning,'' in
  \emph{IEEE ICDCS}, 2021, pp. 183--193.

\bibitem{9155494}
H.~Wang, Z.~Kaplan, D.~Niu, and B.~Li, ``Optimizing federated learning on
  non-iid data with reinforcement learning,'' in \emph{IEEE INFOCOM}, 2020, pp.
  1698--1707.

\bibitem{yue2022efficient}
S.~Yue, J.~Ren, J.~Xin, D.~Zhang, Y.~Zhang, and W.~Zhuang, ``Efficient
  federated meta-learning over multi-access wireless networks,'' \emph{IEEE
  Journal on Selected Areas in Comm.}, 2022.

\bibitem{9355664}
S.~Lin, G.~Yang, and J.~Zhang, ``A collaborative learning framework via
  federated meta-learning,'' in \emph{IEEE ICDCS}, 2020, pp. 289--299.

\bibitem{DBLP:journals/corr/abs-2009-07999}
Y.~Zhou, G.~Pu, X.~Ma, X.~Li, and D.~Wu, ``Distilled one-shot federated
  learning,'' \emph{arXiv}, 2020.

\bibitem{wang2020learning}
Y.~Wang, Z.~Su, N.~Zhang, and A.~Benslimane, ``Learning in the air: Secure
  federated learning for {UAV}-assisted crowdsensing,'' \emph{IEEE Trans. on
  Network Science \& Engineering}, vol.~8, no.~2, pp. 1055--1069, 2020.

\bibitem{wang2021federated}
Y.~Wang, Z.~Su, T.~Luan, R.~Li, and K.~Zhang, ``Federated learning with fair
  incentives and robust aggregation for uav-aided crowdsensing,'' \emph{IEEE
  Trans. on Network Science \& Engineering}, 2021.

\bibitem{liu2020boosting}
Y.~Liu, Z.~Ma, X.~Liu, S.~Ma, S.~Nepal, R.~Deng, and K.~Ren, ``Boosting
  privately: Federated extreme gradient boosting for mobile crowdsensing,'' in
  \emph{IEEE ICDCS}, 2020.

\bibitem{piva2021tags}
M.~Piva, G.~Maselli, and F.~Restuccia, ``The tags are alright: Robust
  large-scale {RFID} clone detection through federated data-augmented radio
  fingerprinting,'' in \emph{ACM MobiHoc}, 2021, pp. 41--50.

\bibitem{9432739}
G.~Reus-Muns and K.~R. Chowdhury, ``Classifying uavs with proprietary waveforms
  via preamble feature extraction and federated learning,'' \emph{IEEE Trans.
  on Vehicular Technology}, vol.~70, no.~7, pp. 6279--6290, 2021.

\bibitem{huang2021resource}
Y.~Huang, J.~Zhang, J.~Duan, B.~Xiao, F.~Ye, and Y.~Yang, ``Resource allocation
  and consensus of blockchains in pervasive edge computing environments,''
  \emph{IEEE Trans. on Mobile Computing}, 2021.

\bibitem{DBLP:journals/corr/abs-2106-12332}
Y.~K. Cheung, S.~Leonardos, G.~Piliouras, and S.~Sridhar, ``From griefing to
  stability in blockchain mining economies,'' \emph{arXiv}, 2021.

\bibitem{276950}
J.~R. Gunasekaran, C.~S. Mishra, P.~Thinakaran, B.~Sharma, M.~T. Kandemir, and
  C.~R. Das, ``Cocktail: A multidimensional optimization for model serving in
  cloud,'' in \emph{USENIX NSDI}, 2022, pp. 1041--1057.

\bibitem{9036983}
S.~Guan, A.~Boukerche, and A.~Loureiro, ``Novel sustainable and heterogeneous
  offloading management techniques in proactive cloudlets,'' \emph{IEEE Trans.
  on Sustainable Computing}, vol.~6, no.~2, pp. 334--346, 2021.

\bibitem{9491614}
T.~Tan and G.~Cao, ``Deep learning video analytics on edge computing devices,''
  in \emph{IEEE SECON}, 2021.

\bibitem{10.1145/3363554}
R.~Mayer and H.-A. Jacobsen, ``Scalable deep learning on distributed
  infrastructures: Challenges, techniques, and tools,'' \emph{ACM Comput.
  Surv.}, vol.~53, no.~1, 2021.

\bibitem{9491589}
Y.~Yang, R.~G. D’Oliveira, S.~El~Rouayheb, X.~Yang, H.~Seferoglu, and
  Y.~Chen, ``Secure coded computation for efficient distributed learning in
  mobile {IoT},'' in \emph{IEEE SECON}, 2021.

\bibitem{9547355}
S.~Tripathi, C.~Puligheddu, C.~F. Chiasserini, and F.~Mungari, ``A
  context-aware radio resource management in heterogeneous virtual {RANs},''
  \emph{IEEE Trans. on Cognitive Commu. \& Netw.}, vol.~8, no.~1, pp. 321--334,
  2022.

\bibitem{boyd2004convex}
S.~Boyd and L.~Vandenberghe, \emph{Convex optimization}.\hskip 1em plus 0.5em
  minus 0.4em\relax Cambridge University Press, 2004.

\bibitem{grant2014cvx}
M.~Grant and S.~Boyd, ``{CVX}: Matlab software for disciplined convex
  programming, version 2.1,'' 2014.

\bibitem{luo2010semidefinite}
Z.-Q. Luo, W.-K. Ma, A.~M.-C. So, Y.~Ye, and S.~Zhang, ``Semidefinite
  relaxation of quadratic optimization problems,'' \emph{IEEE Signal Processing
  Magazine}, vol.~27, no.~3, pp. 20--34, 2010.
  }

\end{thebibliography}

\end{document}